\documentclass{article}

\usepackage{arxiv}
\usepackage[utf8]{inputenc} 
\usepackage[T1]{fontenc}    
\usepackage{hyperref}       
\usepackage{url}            
\usepackage{booktabs}       
\usepackage{amsfonts}       
\usepackage{nicefrac}       
\usepackage{microtype}      
\usepackage{graphicx}
\usepackage{doi}
\usepackage{multirow}
\usepackage{amsmath,amssymb}
\usepackage{amsthm}         
\usepackage{mathtools}      
\usepackage{bm} 
\usepackage{ulem}
\usepackage{algorithmicx}
\usepackage{algpseudocode}
\usepackage{algorithm}
\usepackage{caption}
\usepackage{subcaption}
\usepackage{float}
\usepackage{siunitx}        
\usepackage{xcolor}

\numberwithin{equation}{section}

\definecolor{newcolor}{rgb}{.8,.349,.1}

\theoremstyle{plain} 
\newtheorem{theorem}{Theorem}[section]

\theoremstyle{definition} 

\theoremstyle{remark} 
\newtheorem{remark}[theorem]{Remark}

\title{Muon with Spectral Guidance: Efficient Optimization for Scientific Machine Learning}

\author{
  Binghang Lu\thanks{Equal contribution.} \\
  Elmore Family School of Electrical and Computer Engineering\\
  Purdue University\\
  West Lafayette, IN 47907, USA \\
  \And
  Jiahao Zhang\footnotemark[1] \\
  Department of Mathematics\\
  Purdue University\\
  West Lafayette, IN 47907, USA \\
  \And
  Guang Lin\thanks{Corresponding author: \texttt{guanglin@purdue.edu}} \\
  Department of Mathematics \& School of Mechanical Engineering\\
  Purdue University\\
  West Lafayette, IN 47907, USA \\
}

\begin{document}
\maketitle

\begin{abstract}
Physics-informed neural networks and neural operators often suffer from severe optimization difficulties caused by ill-conditioned gradients, multi-scale spectral behavior, and stiffness induced by physical constraints. Recently, the Muon optimizer has shown promise by performing orthogonalized updates in the singular-vector basis of the gradient, thereby improving geometric conditioning. However, its unit--singular-value updates may lead to overly aggressive steps and lack explicit stability guarantees when applied to physics-informed learning. In this work, we propose \textbf{SpecMuon}, a spectral-aware optimizer that integrates Muon’s orthogonalized geometry with a \textbf{mode-wise relaxed scalar auxiliary variable (RSAV)} mechanism. By decomposing matrix-valued gradients into singular modes and applying RSAV updates individually along dominant spectral directions, SpecMuon adaptively regulates step sizes according to the global loss energy while preserving Muon’s scale-balancing properties. This formulation interprets optimization as a multi-mode gradient flow and enables principled control of stiff spectral components. We establish rigorous theoretical properties of SpecMuon, including a modified energy dissipation law, positivity and boundedness of auxiliary variables, and global convergence with a linear rate under the Polyak--\L{}ojasiewicz condition. Numerical experiments on physics-informed neural networks, DeepONets, and fractional PINN--DeepONets demonstrate that SpecMuon achieves faster convergence and improved stability compared with Adam, AdamW, and the original Muon optimizer on benchmark problems such as the one-dimensional Burgers’ equation and fractional partial differential equations.
\end{abstract}

\keywords{Scientific Machine Learning \and Physics-Informed Neural Networks \and Neural Operators \and Spectral Optimization \and Scalar Auxiliary Variable \and Parametric PDEs}

\section{Introduction}\label{sec:intro}
The rapid development of deep learning has profoundly influenced scientific computing, offering new paradigms for modeling, simulation, and inverse problems governed by partial differential equations (PDEs). In particular, physics-informed neural networks (PINNs)~\cite{PINN1, PINN2,lu2025ipinner} and neural operators, such as DeepONets~\cite{DEEPONET,lu2025fpinn} and Fourier Neural Operators~\cite{FNO, FNO2}, provide flexible frameworks for embedding physical laws into learning architectures or approximating solution operators between infinite-dimensional function spaces. These approaches have demonstrated considerable promise in accelerating simulations, enabling data-efficient learning, and handling high-dimensional problems that are challenging for traditional numerical solvers.

Despite these successes, optimization remains a central bottleneck in physics-informed machine learning~\cite{PINN3, PINN4}. Unlike standard supervised learning tasks in computer vision or natural language processing, the loss landscapes arising from physics constraints are often highly ill-conditioned, characterized by stiff gradients, strong anisotropy, and pronounced multi-scale spectral structure. The simultaneous presence of data-fitting terms, PDE residuals, and boundary or initial conditions can induce severe curvature imbalance across parameter directions, frequently leading to slow convergence, instability, or sensitivity to hyperparameter choices when using standard first-order optimizers such as stochastic gradient descent or Adam \cite{kingma2014adam}. These difficulties become even more pronounced in operator learning settings, where matrix-valued parameters and coupled subnetworks further complicate the optimization geometry~\cite{ENERGY1, ENERGY2, ENERGY3}.

To mitigate these challenges, recent research has explored optimization methods that incorporate geometric or spectral information beyond simple coordinate-wise scaling~\cite{Geo1, Geo2, Geo3}. Among these, the Muon optimizer~\cite{jordan2024muon} has attracted attention for its distinctive update strategy, which operates in the singular-vector basis of gradient matrices rather than the ambient parameter coordinates \cite{jordan2024muon}. By orthogonalizing updates and equalizing singular values, Muon effectively reshapes the optimization geometry and reduces interference between directions with disparate curvature \cite{bernstein2024old}. Although originally developed for large-scale language and vision models, Muon’s geometric perspective is particularly appealing for scientific machine learning, where matrix-valued parameters and spectral imbalance are ubiquitous.

However, the standard Muon update deliberately discards singular value magnitudes, prioritizing geometric displacement over energy control. While this normalization stabilizes the spectrum of the update, it may also produce overly aggressive steps along dominant modes and lacks explicit mechanisms to ensure stability or monotonic decrease of the training loss. In contrast, the scalar auxiliary variable (SAV) methodology, originally developed for gradient flows in computational physics, offers a complementary perspective \cite{shen2018convergence,shen2018scalar,shen2019new}. By augmenting the system with an auxiliary scalar tied to the energy, SAV-based schemes guarantee unconditional stability and modified energy dissipation. Recent adaptations of SAV ideas to neural network optimization~\cite{RVAV1, RVAV2} have demonstrated that energy-based mechanisms can improve robustness and convergence behavior \cite{zhang2024energy,liu2020aegd,liu2023efficient}, yet these approaches typically operate on aggregated gradients and do not explicitly address the spectral structure inherent in physics-informed models.

This observation reveals a fundamental gap in current optimization strategies: geometric conditioning and energy stability are often treated separately, despite both being critical for efficient training in stiff, physics-constrained settings. In this work, we seek to bridge this gap by introducing a spectral-aware optimization framework that unifies orthogonalized geometry with principled energy control. Our central idea is to view optimization as a multi-mode gradient flow, where different spectral components of the gradient evolve under distinct stability constraints.

Motivated by this perspective, we propose \emph{SpecMuon}, a novel optimizer that integrates Muon’s singular-vector-based updates with a mode-wise relaxed scalar auxiliary variable (RSAV) mechanism \cite{jiang2022improving}. By decomposing matrix-valued gradients into their dominant singular directions and applying RSAV updates individually to these modes, SpecMuon adaptively regulates step sizes according to the global loss energy while preserving Muon’s scale-balancing properties. This design enables selective damping of stiff spectral components without sacrificing rapid progress along well-conditioned directions.

The main contributions of this paper are summarized as follows:
\begin{itemize}
\item \textbf{Algorithm Design:} We introduce SpecMuon, a spectral-aware optimizer that combines orthogonalized gradient updates with a mode-wise RSAV mechanism, providing a unified treatment of geometric conditioning and energy stability.
\item \textbf{Theoretical Analysis:} We establish rigorous theoretical guarantees for the proposed method, including a modified energy dissipation law, positivity and boundedness of auxiliary variables, and global convergence with a linear rate under the Polyak--\L{}ojasiewicz condition.
\item \textbf{Numerical Validation:} We demonstrate the effectiveness of SpecMuon through extensive numerical experiments on physics-informed neural networks, DeepONets, and fractional PINN--DeepONets, showing improved convergence speed and stability compared with Adam, AdamW, and the original Muon optimizer.
\end{itemize}

The remainder of this paper is organized as follows. Section~2 reviews the Muon optimizer and the scalar auxiliary variable framework. Section~3 presents the SpecMuon methodology and its theoretical analysis. Numerical experiments are reported in Section~4, and concluding remarks and future directions are given in Section~5.

\section{Background and Related Work}\label{Background}

We consider the unconstrained optimization problem
\[
  \min_{\Theta}\; f(\Theta), 
  \qquad 
  \Theta=\{W^{(\ell)}\in\mathbb{R}^{m_\ell\times n_\ell}\}_{\ell=1}^{L},
\]
where $f$ denotes the neural-network training loss. Although our numerical studies come from scientific computing (e.g., PDE-driven models), the method operates directly on any matrix-valued parameters optimization problem.

\subsection{Muon}
\label{sec:muon}

Let $G^{(\ell)}:=\nabla_{W^{(\ell)}} f(\Theta)\in\mathbb{R}^{m_\ell\times n_\ell}$ be the block gradient. Its thin singular value decomposition (SVD) is
\[
  G^{(\ell)} = U^{(\ell)} \Sigma^{(\ell)} \big(V^{(\ell)}\big)^{\!\top},
\]
and we define the Frobenius-orthonormal rank-one directions
\[
  Q^{(\ell)}_i := u^{(\ell)}_i \big(v^{(\ell)}_i\big)^{\!\top}, 
  \qquad 
  \langle Q^{(\ell)}_i,Q^{(\ell)}_j\rangle_F=\delta_{ij},\;\; \|Q^{(\ell)}_i\|_F=1.
\]
The original Muon step replaces the singular values by ones and moves in the orthogonalized direction spanned by the singular vectors. With stepsize $\eta>0$ and, in practice, a rank budget $k_\ell\le \mathrm{rank}(G^{(\ell)})$, the update reads
\[
  \Delta W^{(\ell)}_{\text{Muon}}
  \;=\;
  -\,\eta \sum_{i=1}^{k_\ell} Q^{(\ell)}_i
  \;=\;
  -\,\eta\, U^{(\ell)} \big(V^{(\ell)}\big)^{\!\top}.
\]
This ``unit-SV'' update preserves the singular vectors and discards the magnitudes, which stabilizes the spectrum of the step and reduces interference between directions with disparate curvature. 

In practice, Muon implements the unit-SV transformation without an explicit SVD. It approximates the polar factor of a normalized update matrix via a few iterations of the Newton–Schulz method, e.g.,
\[
  X_{t+1} \;=\; \tfrac{3}{2}\,X_t \;-\; \tfrac{1}{2}\,X_t X_t^{\!\top} X_t,
\]
which drives the singular values of $X_t$ toward $1$ while preserving the singular vectors. This yields an efficient orthogonalized direction on modern hardware.

\subsection{Scalar auxiliary variable (SAV) and the relaxed variant (RSAV)}
\label{sec:sav-rsav}

Consider the gradient flow for the loss
\[
  \dot{\Theta}(t) \;=\; -\,\nabla f\big(\Theta(t)\big),
\]
and introduce a fixed shift $\kappa>0$ so that $f(\Theta)+\kappa>0$. 
Define the scalar auxiliary variable:
\[
  r(t) \;:=\; \sqrt{\,f\big(\Theta(t)\big)+\kappa\,}.
\]
The SAV idea couples $\Theta$ and $r$ through the extended system
\begin{equation}
\label{eq:cts-sav}
  \dot{\Theta}(t) \;+\; \frac{r(t)}{\sqrt{\,f(\Theta(t))+\kappa\,}}\;\nabla f\big(\Theta(t)\big) \;=\; 0,
  \qquad
  \dot{r}(t) \;=\; \frac{1}{2\sqrt{\,f(\Theta(t))+\kappa\,}}\,
                  \nabla f\big(\Theta(t)\big)^{\!\top}\dot{\Theta}(t).
\end{equation}
The pair \eqref{eq:cts-sav} is equivalent to the original gradient flow when $r(t)=\sqrt{f(\Theta(t))+\kappa}$ holds exactly, yet it exposes a simple Lyapunov structure.

\begin{theorem}[Modified energy law]
Let $(\Theta(t),r(t))$ satisfy \eqref{eq:cts-sav}. Then the modified energy $r(t)^2$ dissipates:
\[
  \frac{d}{dt}\,r(t)^2 \;=\; -\,\frac{r(t)^2}{\,f(\Theta(t))+\kappa\,}\;\big\|\nabla f\big(\Theta(t)\big)\big\|_F^2 \;\le\; 0.
\]
\end{theorem}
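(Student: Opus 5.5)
The plan is a direct chain-rule computation using only the two equations in \eqref{eq:cts-sav}; no earlier result is needed beyond the standing assumption $f+\kappa>0$. I will write $g(t):=\nabla f(\Theta(t))$ and $s(t):=\sqrt{f(\Theta(t))+\kappa}>0$. I also assume enough regularity for the chain rule: $f\in C^1$ and $(\Theta,r)$ a $C^1$ solution.

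First, differentiate the modified energy:
\[
\frac{d}{dt}r(t)^2 = 2r(t)\dot r(t).
\]
Next, substitute the second equation of \eqref{eq:cts-sav} for $\dot r$. This gives
\[
\frac{d}{dt}r^2 = \frac{r}{s}\,\langle g,\dot\Theta\rangle_F ,
\]
where the Euclidean inner product $g^\top\dot\Theta$ is read as the Frobenius inner product over all parameter blocks. Then substitute the first equation, $\dot\Theta=-(r/s)\,g$. This yields
\[
\frac{d}{dt}r^2=-\frac{r^2}{s^2}\|g\|_F^2=-\frac{r^2}{f(\Theta)+\kappa}\|\nabla f(\Theta)\|_F^2 ,
\]
which is exactly the claimed identity.

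Nonpositivity follows because $r^2\ge0$, $\|g\|_F^2\ge0$ and $f+\kappa>0$. The inequality therefore holds regardless of the sign of $r$, so the proof does not need any positivity of the auxiliary variable.

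The only delicate point is bookkeeping rather than analysis. The pairing $\nabla f^\top\dot\Theta$ must be interpreted consistently as $\sum_\ell\langle G^{(\ell)},\dot W^{(\ell)}\rangle_F$, so that it produces $\|\nabla f\|_F^2$ summed over blocks. I will state this convention explicitly.

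It is also worth remarking, though not needed for the proof, why \eqref{eq:cts-sav} is consistent with the original gradient flow. The $r$-equation is exactly $\frac{d}{dt}\sqrt{f(\Theta)+\kappa}$ along trajectories. Hence if $r(0)=s(0)$, then $r\equiv s$, and the system reduces to $\dot\Theta=-\nabla f$. In that case the identity reduces to $\frac{d}{dt}f=-\|\nabla f\|_F^2$.
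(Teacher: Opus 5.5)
Your proposal is correct and follows the paper's argument: the paper also multiplies the $r$-equation by $2r$ to get $2r\dot r=\frac{r}{\sqrt{f+\kappa}}\nabla f^\top\dot\Theta$ and then substitutes $\dot\Theta=-\frac{r}{\sqrt{f+\kappa}}\nabla f$. Your extra remarks are correct but go beyond what the paper records: the blockwise Frobenius pairing convention, that nonpositivity needs no sign condition on $r$, and that $r-\sqrt{f+\kappa}$ is conserved along trajectories.
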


\begin{proof}
Multiply the second equation in \eqref{eq:cts-sav} by $2r(t)$ to get
\[
2r\dot r = \frac{r}{\sqrt{f+\kappa}}\nabla f^\top \dot{\Theta}.
\]
Combine it with the first equation, $\dot{\Theta}= -\frac{r}{\sqrt{f+\kappa}}\nabla f$, to obtain:
\[
2r\dot r = -\frac{r^2}{f+\kappa}\|\nabla f\|_F^2,
\]
\[
\frac{d}{dt}r^2\le 0.
\]
\end{proof}

Now let $h>0$ be a time step and denote $G(\Theta):=\nabla f(\Theta)$ and 
$E(\Theta):=\sqrt{f(\Theta)+\kappa}$.
A first-order explicit SAV step is obtained by evaluating $G$ at $\Theta^k$ and by
predicting $r$ with a scalar decoupling:
\begin{align}
  \tilde r^{\,k+1}
  \;&=\;
  \frac{r^{k}}{\,1+\dfrac{h}{2}\,\dfrac{\|G(\Theta^{k})\|_F^2}{\big(E(\Theta^k)\big)^2}\,}, 
  \label{eq:sav-predict-r}\\[2mm]
  \Theta^{k+1}
  \;&=\;
  \Theta^{k}
  \;-\; h\,\frac{\tilde r^{\,k+1}}{E(\Theta^{k})}\,G(\Theta^{k}).
  \label{eq:sav-update-theta}
\end{align}
Formula \eqref{eq:sav-predict-r} comes from discretizing the second equation in
\eqref{eq:cts-sav} and substituting the discrete version of \eqref{eq:sav-update-theta}
to eliminate $\Theta^{k+1}$, which yields a scalar correction for $r$.
The pair \eqref{eq:sav-predict-r}--\eqref{eq:sav-update-theta} decreases the discrete
modified energy $r^2$ for any $h>0$.

In \cite{jiang2022improving}, the SAV predictor $\tilde r^{\,k+1}$ can also be directly connected to the current loss value $E(\Theta^{k+1})$. 
RSAV enforces alignment by a convex relaxation:
\begin{align}
  \tilde r^{\,k+1}
  \;&=\;
  \frac{r^{k}}{\,1+\dfrac{h}{2}\,\dfrac{\|G(\Theta^{k})\|_F^2}{\big(E(\Theta^k)\big)^2}\,},
  \label{eq:rsav-rpred}\\[1mm]
  \Theta^{k+1}
  \;&=\;
  \Theta^{k}
  \;-\; h\,\frac{\tilde r^{\,k+1}}{E(\Theta^{k})}\,G(\Theta^{k}),
  \label{eq:rsav-theta}\\[1mm]
  r^{k+1}
  \;&=\;
  \xi^{k}\,\tilde r^{\,k+1}
  \;+\;
  \bigl(1-\xi^{k}\bigr)\,E(\Theta^{k+1}),
  \qquad \xi^{k}\in[0,1].
  \label{eq:rsav-relax}
\end{align}
The relaxation parameter $\xi^{k}$ is chosen to guarantee a discrete dissipation law.
Let
\[
  {D}^{k+1} \;:=\; \frac{1}{h}\,\bigl\|\Theta^{k+1}-\Theta^{k}\bigr\|_F^{2}.
\]
Fix a tolerance $\psi\in(0,1)$ (e.g., $\psi=0.95$). 
Select $\xi$ as the smallest root in $[0,1]$ of the quadratic inequality
\begin{equation}
\label{eq:rsav-quad}
  \bigl(\xi\,\tilde r^{\,k+1} + (1-\xi)E(\Theta^{k+1})\bigr)^2 
  \;-\; \bigl(\tilde r^{\,k+1}\bigr)^2 \;-\; \bigl(\tilde r^{\,k+1} - r^{\,k}\bigr)^2
  \;\le\; \psi\,{D}^{k+1},
\end{equation}
which has the explicit solution
\[
  \xi^{k} \;=\; 
  \max\!\left\{\,0,\;
  \frac{-\,b - \sqrt{\,b^{2}-4ac\,}}{2a}\right\},
\]
with explicit coefficients $a, b, c$ defined by
\[
  a=\bigl(\tilde r^{\,k+1}-E(\Theta^{k+1})\bigr)^2,
\]
\[
  b=2E(\Theta^{k+1})\bigl(\tilde r^{\,k+1} - E(\Theta^{k+1})\bigr),
\]
\[
  c=\; f(\Theta^{k+1})+\kappa \;-\; \bigl(\tilde r^{\,k+1}\bigr)^2 \;-\; \psi\,{D}^{k+1}.
\]

\begin{theorem}[Discrete energy dissipation for RSAV predictor and RSAV]
Fix $h>0$ and $\kappa>0$. Let
\[
E_k:=\sqrt{f(\Theta^k)+\kappa},\qquad
G_k:=\nabla f(\Theta^k).
\]
Assume the RSAV predictor updates following \eqref{eq:rsav-rpred} and \eqref{eq:rsav-theta}. Define the discrete dissipation term
\[
 D^{k+1}:=\frac{1}{h}\,\big\|\Theta^{k+1}-\Theta^k\big\|_F^2 .
\]
Then the predictor satisfies
\begin{equation}\label{eq:predictor-dissip}
\big(\tilde r^{\,k+1}\big)^2-\big(r^k\big)^2 \;\le\; -\, D^{k+1}\;\le\;0 .
\end{equation}
Moreover, if the relaxation is chosen so that for some $\psi\in[0,1)$ satisfying \eqref{eq:rsav-relax} and \eqref{eq:rsav-quad},
then
\begin{equation}\label{eq:full-dissip}
\big(r^{k+1}\big)^2-\big(r^k\big)^2
\;\le\; -\,(1-\psi)\, D^{k+1}
\;\le\; 0 .
\end{equation}
\end{theorem}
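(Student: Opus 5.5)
The plan is a direct algebraic computation: first prove the predictor identity \eqref{eq:predictor-dissip}, then read off \eqref{eq:full-dissip} from the relaxation inequality \eqref{eq:rsav-quad}.

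\textbf{Predictor.} Set $s_k:=\|G_k\|_F^2/E_k^2\ge 0$. Then \eqref{eq:rsav-rpred} reads $\tilde r^{\,k+1}(1+\tfrac h2 s_k)=r^k$, which is equivalent to
\[
\tilde r^{\,k+1}-r^k=-\tfrac h2\, s_k\,\tilde r^{\,k+1}.
\]
From \eqref{eq:rsav-theta} we have $\|\Theta^{k+1}-\Theta^k\|_F^2=h^2(\tilde r^{\,k+1})^2 s_k$. Hence $D^{k+1}=h\,s_k(\tilde r^{\,k+1})^2$.

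Next use the elementary identity $x^2-y^2=2x(x-y)-(x-y)^2$ with $x=\tilde r^{\,k+1}$ and $y=r^k$. This gives
\[
(\tilde r^{\,k+1})^2-(r^k)^2=2\tilde r^{\,k+1}\bigl(-\tfrac h2 s_k\tilde r^{\,k+1}\bigr)-(\tilde r^{\,k+1}-r^k)^2=-D^{k+1}-(\tilde r^{\,k+1}-r^k)^2 .
\]
This exact identity is stronger than \eqref{eq:predictor-dissip}. It needs no sign condition on $r^k$ and no restriction on $h$.

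\textbf{Relaxed step.} Put $r^{k+1}=\xi\tilde r^{\,k+1}+(1-\xi)E_{k+1}$, where $\xi=\xi^k$ satisfies \eqref{eq:rsav-quad}. Then
\[
(r^{k+1})^2\le (\tilde r^{\,k+1})^2+(\tilde r^{\,k+1}-r^k)^2+\psi D^{k+1}.
\]
Substitute the exact identity above for $(\tilde r^{\,k+1})^2$. The squared term $(\tilde r^{\,k+1}-r^k)^2$ cancels, leaving $(r^{k+1})^2\le (r^k)^2-D^{k+1}+\psi D^{k+1}$, which is \eqref{eq:full-dissip}. Since $\psi<1$ and $D^{k+1}\ge0$, the right-hand side is $\le0$.

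\textbf{Main obstacle: the relaxation is well defined.} One must check that some $\xi\in[0,1]$ satisfying \eqref{eq:rsav-quad} exists, and that the explicit formula for $\xi^k$ produces one.

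Existence is immediate. At $\xi=1$ the left side of \eqref{eq:rsav-quad} equals $-(\tilde r^{\,k+1}-r^k)^2\le 0\le\psi D^{k+1}$, so the feasible set is nonempty.

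For the formula, write the left side minus $\psi D^{k+1}$ as $p(\xi)=a\xi^2+b\xi+c$. This uses $(\xi\tilde r+(1-\xi)E)^2=(E+\xi(\tilde r-E))^2$ together with $E_{k+1}^2=f(\Theta^{k+1})+\kappa$. Cases:
\begin{itemize}
\item If $c\le0$, then $\xi=0$ is feasible.
\item If $c>0$ and $a>0$, then $p(1)\le 0<p(0)$. So $p$ has real roots and $b^2-4ac\ge0$. The smaller root $(-b-\sqrt{b^2-4ac})/(2a)$ is the first crossing into the feasible set, so it lies in $[0,1]$. The $\max\{0,\cdot\}$ in the formula covers the cases where the root is negative.
\item If $a=0$, then $\tilde r^{\,k+1}=E_{k+1}$, so every $\xi$ gives the same $r^{k+1}=E_{k+1}=\tilde r^{\,k+1}$, which already satisfies the constraint.
\end{itemize}
I would state these cases briefly and otherwise take the choice of $\xi^k$ as given by hypothesis, as the theorem does.
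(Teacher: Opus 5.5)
Your proof is correct and follows the paper's argument: the same identities $D^{k+1}=h\,s_k(\tilde r^{\,k+1})^2$ and $\tilde r^{\,k+1}-r^k=-\tfrac h2 s_k\tilde r^{\,k+1}$, the same exact identity $(\tilde r^{\,k+1})^2-(r^k)^2=-D^{k+1}-(\tilde r^{\,k+1}-r^k)^2$, and the same step of adding \eqref{eq:rsav-quad} to that identity. The one small difference is that you get the predictor bound directly from the exact identity, whereas the paper factors the difference of squares and uses $\tilde r^{\,k+1}<r^k$, which tacitly needs $r^k>0$; your version therefore needs no sign condition, and your optional well-posedness check is sound too, although the paper's displayed $c$ omits the $-(\tilde r^{\,k+1}-r^k)^2$ term and your case analysis works with either version.
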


\begin{proof}
From \eqref{eq:rsav-theta},
\[
\Theta^{k+1}-\Theta^k = -\,h\,\frac{\tilde r^{\,k+1}}{E_k}\,G_k,
\quad\Longrightarrow\quad
 D^{k+1}
=\frac{1}{h}\big\| -h\,\tfrac{\tilde r^{\,k+1}}{E_k}\,G_k\big\|_F^2
= h\,\big(\tilde r^{\,k+1}\big)^2\frac{\|G_k\|_F^2}{E_k^2}.
\]
On the other hand, the first predictor formula in \eqref{eq:rsav-rpred} implies
\[
\tilde r^{\,k+1}-r^k
= -\,\frac{h}{2}\,\frac{\|G_k\|_F^2}{E_k^2}\,\tilde r^{\,k+1}.
\]
Multiply both sides by $-2\tilde r^{\,k+1}$ to get the identity
\begin{equation}\label{eq:key}
-\,2\big(\tilde r^{\,k+1}-r^k\big)\,\tilde r^{\,k+1}
= h\,\big(\tilde r^{\,k+1}\big)^2 \frac{\|G_k\|_F^2}{E_k^2}
=  D^{k+1}.
\end{equation}
Note that $\tilde r^{\,k+1}<r^k$ because the denominator in \eqref{eq:rsav-rpred} exceeds $1$.
Hence
\[
\big(\tilde r^{\,k+1}\big)^2-\big(r^k\big)^2
= \big(\tilde r^{\,k+1}-r^k\big)\big(\tilde r^{\,k+1}+r^k\big)
\le \big(\tilde r^{\,k+1}-r^k\big)\,\big(2\tilde r^{\,k+1}\big)
= -\, D^{k+1},
\]
where the last equality uses \eqref{eq:key}. This proves \eqref{eq:predictor-dissip}.

\noindent By assumption \eqref{eq:rsav-quad}, we have
\[
\big(r^{k+1}\big)^2 - \big(\tilde r^{\,k+1}\big)^2
\;\le\; \psi\, D^{k+1} + \big(\tilde r^{\,k+1}-r^k\big)^2 .
\]
Add this inequality to the exact identity
\(
\big(\tilde r^{\,k+1}\big)^2 - \big(r^k\big)^2
= -\, D^{k+1} - \big(\tilde r^{\,k+1}-r^k\big)^2
\)
(from \eqref{eq:key}) to obtain
\[
\big(r^{k+1}\big)^2 - \big(r^k\big)^2
\;\le\; -\,(1-\psi)\, D^{k+1},
\]
which is \eqref{eq:full-dissip}. Because $\psi\in[0,1)$ and $ D^{k+1}\ge 0$, this yields
monotone decay of the modified energy $r^2$.
\end{proof}

\section{Methodology}\label{sec:method}
The core idea of the proposed method is to decompose each matrix gradient into its singular directions and update mode by mode.
Let
\[
  G \;=\; U\Sigma V^{\top} \;=\; \sum_{i} \sigma_i\, u_i v_i^{\top},
  \qquad
  Q_i := u_i v_i^{\top}.
\]
A standard Muon step replaces the singular values by ones and moves in the orthogonalized direction:
\[
  -\,\eta \sum_{i} Q_i .
\]

\noindent Equivalently, since \( G=\sum_{i}\sigma_i Q_i \), we can write
\[
  -\,\eta \sum_{i} Q_i
  \;=\;
  - \sum_{i} \Big(\frac{\eta}{\sigma_i}\Big)\,\sigma_i Q_i,
\]
which is a projected gradient step in the original spectral basis with per-mode learning rate
\[
  h_i \;=\; \frac{\eta}{\sigma_i}.
\]
Thus a single Muon update can be viewed as a coordinated collection of independent micro-steps along the decoupled singular directions.

\subsection{SpecMuon Algorithm}
\label{subsec:muon-rsav}

We now present the proposed update in a form parallel to Section \ref{sec:sav-rsav} and consistent with the notation of Section 2.
Fix a matrix block $W$ and an iteration index $k$. Define
\[
  E^{k} := \sqrt{\,f(\Theta^{k})+\kappa\,}, 
  \qquad 
  G^{k} := \nabla_{W} f(\Theta^{k}).
\]
Following Section 2.1, decompose the block gradient by a (truncated) SVD,
\[
  G^{k} \;=\; U^{k}\Sigma^{k}(V^{k})^{\!\top}
  \;=\; \sum_{i=1}^{k_r} \sigma_{i}^{\,k}\, Q_{i}^{\,k},
  \qquad 
  Q_{i}^{\,k} := u_{i}^{\,k}(v_{i}^{\,k})^{\!\top},
\]
where the rank-one matrices $\{Q_{i}^{\,k}\}$ are Frobenius-orthonormal:
$\langle Q_{i}^{\,k}, Q_{j}^{\,k}\rangle_F=\delta_{ij}$ and $\|Q_{i}^{\,k}\|_F=1$.
This orthonormality allows us to treat each singular direction as a scalar mode at the current iterate.

A Muon step sets singular values to one and updates as $-\eta\sum_{i} Q_{i}^{\,k}$.
A projected-gradient step in the original spectral basis reads $-\sum_{i} h_i^{\,k}\,\sigma_i^{\,k} Q_i^{\,k}$.
Matching coefficients mode by mode gives $h_i^{\,k}\sigma_i^{\,k}=\eta$, i.e., $h_i^{\,k}=\eta/\sigma_i^{\,k}$.
Large singular values therefore receive smaller effective steps, while small singular values receive larger ones, reproducing Muon’s scale balancing in the spectral basis.
We therefore set
\[
  h_{i}^{\,k} := \frac{\eta}{\sigma_{i}^{\,k}},
  \qquad \eta>0,
  \qquad
  g_{i}^{\,k} := \frac{\sigma_{i}^{\,k}}{E^{k}}.
\]

To each singular direction $Q_{i}^{\,k}$ we attach a scalar auxiliary variable $r_{i}^{\,k}$,
initialized by $r_{i}^{\,0}=E^{0}$.
This is the per-mode analogue of the single $r^{k}$ in Section 2.2, now aligned with the singular basis. Before moving the parameter, we form the RSAV predictor in each singular direction.
Specializing the SAV decoupling of Section 2.2 to step $h_i^{\,k}$ and direction $Q_i^{\,k}$ gives
\[
  \tilde r_{i}^{\,k+1}
  \;=\;
  \frac{r_{i}^{\,k}}{\,1+\dfrac{h_{i}^{\,k}}{2}\,(g_{i}^{\,k})^{2}\,}
  \;=\;
  \frac{r_{i}^{\,k}}{\,1+\dfrac{\eta}{2}\,\dfrac{\sigma_{i}^{\,k}}{(E^{k})^{2}}\,}.
\]
Note that there is no matrix norm appears because $\|Q_{i}^{\,k}\|_{F}=1$. The denominator depends only on $\sigma_{i}^{\,k}$ and $E^{k}$.

\noindent Once $\tilde r_{i}^{\,k+1}$ is computed, we advance the parameter along the singular directions:
\[
  W^{k+1}
  \;=\;
  W^{k}
  \;-\;
  \frac{\eta}{E^{k}}\sum_{i=1}^{k_r}\tilde r_{i}^{\,k+1}\,Q_{i}^{\,k}.
\]
Equivalently, in each mode the projected increment satisfies
\[
  \big\langle W^{k+1}-W^{k},\,Q_{i}^{\,k}\big\rangle_F
  \;=\; -\,h_{i}^{\,k}\,\tilde r_{i}^{\,k+1}\,g_{i}^{\,k},
\]
and orthonormality ensures that contributions from different modes add without cross terms.

\noindent After computing $W^{k+1}$, we evaluate the new energy root
$E^{k+1}:=\sqrt{\,f(\Theta^{k+1})+\kappa\,}$ and align each auxiliary variable with the current loss level through the RSAV relaxation:
\[
  r_{i}^{\,k+1}
  \;=\;
  \xi_{i}^{\,k}\,\tilde r_{i}^{\,k+1}
  + \big(1-\xi_{i}^{\,k}\big)\,E^{k+1},
  \qquad \xi_{i}^{\,k}\in[0,1].
\]
The coefficient $\xi_{i}^{\,k}$ is chosen by the same short quadratic rule as in equation \eqref{eq:rsav-quad}, applied per mode with the quantities $(\tilde r_{i}^{\,k+1}, E^{k+1}, h_i^{\,k}, g_i^{\,k})$.
The predictor provides a stable provisional magnitude and the relaxation uses information at the updated iterate to keep $r_{i}^{\,k+1}$ tied to the current loss scale. At iteration $k{+}1$ the SVD is recomputed on the fresh gradient $G^{k+1}$, yielding $\{Q_{i}^{\,k+1},\sigma_{i}^{\,k+1}\}$; the relaxed values $\{r_{i}^{\,k+1}\}$ serve as inputs to the new predictors, and the cycle repeats.

The proposed algorithm is summarized in Algorithm \ref{alg:MuonSAV} for implementation purpose. Note that the theoretical analysis in Sections~3.1--3.3 focuses on the core Muon-RSAV update without momentum, gradient normalization, or heuristic stabilizers. Algorithm \ref{alg:MuonSAV} includes these additional components for practical performance and numerical robustness. As is standard in optimization analysis, the theoretical results describe the underlying update mechanism, while the full algorithm reflects implementation-level enhancements.

\begin{algorithm}[H]
\caption{SpecMuon}
\label{alg:MuonSAV}
\begin{algorithmic}[1]
\Require Learning rate $\eta$, momentum $\mu$, top singular directions $k$, SAV smoothing factor $\xi$, epsilon $\epsilon$
\State Initialize momentum buffer $B_0 \gets 0$
\State Initialize SAV state vector $r_0 \gets \sqrt{\mathcal{L}_0} \cdot \mathbf{1}$
\For{$t = 1, \ldots$}
    \State Compute gradient $G_t \gets \nabla_{\theta} \mathcal{L}_t(\theta_{t-1})$
    \State Normalize gradient $\hat{G}_t \gets G_t / (\|G_t\|_F + \epsilon)$
    \State Compute SVD: $U, S, V^T \gets \mathrm{SVD}(\hat{G}_t)$
    \State Initialize update direction $O_t \gets 0$
    \State \textbf{// SAV update for top-$k$ directions}
    \For{$j = 1, \ldots, k$}
        \State $u_j, s_j, v_j^T \gets j$-th singular components from $U, S, V^T$
        \State Compute SAV learning rate $\eta'_j \gets \eta / (s_j + \epsilon)$
        \State $d_g \gets (s_j \cdot u_j v_j^T) / (\sqrt{\mathcal{L}_t} + \epsilon)$
        \State $r^{\text{new}}_j \gets r_{t-1,j} / (1 + 0.5 \cdot \eta'_j \cdot \|d_g\|_F)$
        \State $O_t \gets O_t + (r^{\text{new}}_j / (\sqrt{\mathcal{L}_t} + \epsilon)) \cdot u_j v_j^T$
        \State \textbf{// Update $r$ for next step}
        \State $T \gets (1-\xi)(r^{\text{new}}_j)^2 + \xi(r_{t-1,j})^2 + (1-\xi)(r^{\text{new}}_j - r_{t-1,j})^2$
        \State $\chi \gets (\sqrt{\mathcal{L}_t} - \sqrt{T}) / (\sqrt{\mathcal{L}_t} - r^{\text{new}}_j + \epsilon)$
        \State $r_{t,j} \gets \mathrm{clamp}(\chi, 0, 1) \cdot r^{\text{new}}_j + (1 - \mathrm{clamp}(\chi, 0, 1)) \cdot \sqrt{\mathcal{L}_t}$
    \EndFor
    \State \textbf{// Muon update for remaining directions}
    \State $U_{k:}, S_{k:}, V^T_{k:} \gets$ remaining singular components
    \State $O_t \gets O_t + U_{k:} \, \mathrm{diag}(S_{k:}) \, V^T_{k:}$
    \State \textbf{// Apply momentum and update parameters}
    \State $B_t \gets \mu B_{t-1} + O_t$
    \State Update parameters $\theta_t \gets \theta_{t-1} - \eta B_t$
\EndFor \\
\Return $\theta_t$
\end{algorithmic}
\end{algorithm}

\subsection{Computational efficiency}
\label{subsec:truncated-r}

The update in Section \ref{subsec:muon-rsav} is defined mode by mode in the singular basis. In practice, we keep only a small number of dominant modes.
We therefore adopt a truncated spectral budget $k_r$ (``top-$k$'' singular directions and implemented as \texttt{rtop} in Algorithem \ref{alg:MuonSAV}) and a matching truncation of auxiliary variables $r_i^{\,k}$. This preserves Muon’s geometric benefits while keeping the additional RSAV overhead negligible.

For a block $W\in\mathbb{R}^{m\times n}$ the dominant costs are:
\begin{itemize}
\item \textbf{Top-$k_r$ SVD / orthogonalization.}
A randomized range finder followed by a small SVD has complexity
$O(mn\,k_r + (m{+}n)k_r^2)$.
When Muon uses a polar (Newton--Schulz) iteration, we still require only a thin orthonormal basis. The cost is linear in $k_r$ once that basis is formed.
\item \textbf{Per-mode RSAV algebra.}
The predictor $\tilde r_i^{\,k+1}$ and the relaxation for each retained mode are scalar operations, for a total of $O(k_r)$ time and memory.
\item \textbf{Aggregated parameter update.}
Forming $\sum_{i=1}^{k_r} \tilde r_i^{\,k+1} Q_i^{\,k}$ is a linear combination and is typically dominated by backprop to obtain $G^{k}$.
\end{itemize}
Hence, relative to Muon, the incremental work of RSAV is $O(k_r)$ scalars plus one evaluation of $E^{k+1}=\sqrt{f(\Theta^{k+1})+\kappa}$. Note that two structural properties justify this truncation. Firstly, in many scientific-computing models the gradient spectrum is skewed, that is, a few singular directions carry most of the energy. Secondly, the RSAV equations are independent per mode. Each update uses only $(\sigma_i^{\,k},E^{k},r_i^{\,k})$ and the orthonormal direction $Q_i^{\,k}$.
Dropping the tail therefore removes small-magnitude contributions without introducing cross-mode errors at the current iterate (orthonormality eliminates cross terms).
If neglected directions later become important, they automatically re-enter when their singular values appear in the top-$k_r$.

Truncating both the spectral basis (to $k_r$ modes) and the auxiliaries (to $\{r_i\}_{i=1}^{k_r}$) yields an update whose cost is essentially that of Muon plus $O(k_r)$ scalar work.
The method captures the dominant geometry, adapts per-mode magnitudes via RSAV with negligible overhead, and scales gracefully with model size.

\subsection{Analysis of the Algorithm}
At each iterate $k$, the gradient of a matrix block admits the orthonormal singular basis
$G^{k}=\sum_{i=1}^{k_r}\sigma_i^{\,k}Q_i^{\,k}$ with $\langle Q_i^{\,k},Q_j^{\,k}\rangle_F=\delta_{ij}$, and our update in Section \ref{subsec:muon-rsav} applies the RSAV predictor and relaxation per mode using the Muon-induced steps $h_i^{\,k}=\eta/\sigma_i^{\,k}$. Because these directions are Frobenius-orthonormal, all inner products that enter the discrete identities (e.g., the projected increments $\langle W^{k+1}-W^{k},Q_i^{\,k}\rangle_F$) decouple across $i$. That is, there are no cross terms at the current iterate. Thus, every structural property introduced in Section 2, Section \ref{subsec:muon-rsav} and Section \ref{subsec:truncated-r} can be proved for a single singular direction and then summed over $i=1,\dots,k_r$ to obtain the block-level statement. Although the SVD (and hence $\{Q_i^{\,k}\}$) is recomputed at each step, our estimates are established within iteration $k$ and extend to the full update by simple summation. We therefore present the analysis in a single mode setting and the extension to the total $G^{k}$ is immediate.

\begin{theorem}[Modified-energy dissipation (mode-wise and global)]
\label{thm:mod-energy}
Suppose the Muon--RSAV update of Section \ref{subsec:muon-rsav} with
per-mode step sizes $h_i^{\,k}=\eta/\sigma_i^{\,k}$ and predictors
$\tilde r_i^{\,k+1}=r_i^{\,k}\big/\!\bigl(1+\tfrac{h_i^{\,k}}{2}(g_i^{\,k})^2\bigr)$,
where $g_i^{\,k}=\sigma_i^{\,k}/E^{k}$ and $E^{k}=\sqrt{f(\Theta^{k})+\kappa}$.
After the parameter update $W^{k+1}$, let $E^{k+1}=\sqrt{f(\Theta^{k+1})+\kappa}$ and define the
per-mode relaxation
\[
r_i^{\,k+1}=\xi_i^{\,k}\,\tilde r_i^{\,k+1}+(1-\xi_i^{\,k})\,E^{k+1},\qquad \xi_i^{\,k}\in[0,1],
\]
with $\xi_i^{\,k}$ chosen by the quadratic rule of \eqref{eq:rsav-quad} using a fixed
$\psi\in(0,1)$.
Then, for each retained singular mode $i$,
\[
\bigl(r_i^{\,k+1}\bigr)^2-\bigl(r_i^{\,k}\bigr)^2
\;\le\; -\,(1-\psi)\,D_i^{\,k+1}\;\le\;0,
\qquad
D_i^{\,k+1}:=\frac{1}{h_i^{\,k}}\big\langle W^{k+1}-W^{k},\,Q_i^{\,k}\big\rangle_F^{2}.
\]
Summing over the $k_r$ retained modes yields the global dissipation of the modified energy
$ M^{k}:=\sum_{i=1}^{k_r}\bigl(r_i^{\,k}\bigr)^2$:
\[
 M^{k+1}- M^{k}
\;\le\; -\,(1-\psi)\sum_{i=1}^{k_r} D_i^{\,k+1}
\;\le\; 0.
\]
\end{theorem}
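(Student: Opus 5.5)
The plan is to reduce the statement to the single-mode argument already used for the discrete RSAV dissipation theorem of Section~\ref{sec:sav-rsav}, with $h$ replaced by $h_i^{\,k}$, $\|G_k\|_F^2/E_k^2$ replaced by $(g_i^{\,k})^2$, and $D^{k+1}$ replaced by $D_i^{\,k+1}$. Fix a retained mode $i$. The first step is to compute the projected increment. By Frobenius orthonormality of $\{Q_j^{\,k}\}$, the update $W^{k+1}-W^k=-\frac{\eta}{E^k}\sum_j \tilde r_j^{\,k+1}Q_j^{\,k}$ gives
\[
\langle W^{k+1}-W^k,Q_i^{\,k}\rangle_F=-\frac{\eta}{E^k}\tilde r_i^{\,k+1}=-h_i^{\,k}\tilde r_i^{\,k+1}g_i^{\,k}.
\]
Hence
\[
D_i^{\,k+1}=\frac{1}{h_i^{\,k}}\bigl(h_i^{\,k}\tilde r_i^{\,k+1}g_i^{\,k}\bigr)^2=h_i^{\,k}\bigl(\tilde r_i^{\,k+1}\bigr)^2(g_i^{\,k})^2 .
\]
This is exactly the analogue of the expression for $D^{k+1}$ in the earlier proof. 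No cross terms appear, because only the coefficient along $Q_i^{\,k}$ enters.

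The second step is to derive the key identity from the predictor formula. Rearranging it gives $\tilde r_i^{\,k+1}-r_i^{\,k}=-\frac{h_i^{\,k}}{2}(g_i^{\,k})^2\tilde r_i^{\,k+1}$. Multiplying by $-2\tilde r_i^{\,k+1}$ yields $-2(\tilde r_i^{\,k+1}-r_i^{\,k})\tilde r_i^{\,k+1}=D_i^{\,k+1}$. Using $2a(a-b)=a^2-b^2+(a-b)^2$, this becomes the exact identity
\[
(\tilde r_i^{\,k+1})^2-(r_i^{\,k})^2=-D_i^{\,k+1}-(\tilde r_i^{\,k+1}-r_i^{\,k})^2 .
\]

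The third step uses the relaxation. By the choice of $\xi_i^{\,k}$ through the per-mode version of \eqref{eq:rsav-quad}, with $D_i^{\,k+1}$ in the role of $D^{k+1}$, we have
\[
(r_i^{\,k+1})^2-(\tilde r_i^{\,k+1})^2\le \psi D_i^{\,k+1}+(\tilde r_i^{\,k+1}-r_i^{\,k})^2 .
\]
Adding this to the identity of the second step cancels the squared difference and gives $(r_i^{\,k+1})^2-(r_i^{\,k})^2\le-(1-\psi)D_i^{\,k+1}\le 0$. Summing over $i=1,\dots,k_r$ then gives the bound on $M^{k+1}-M^k$ directly.

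The main obstacle is justifying that this admissibility inequality actually holds, i.e., that the quadratic rule has a root in $[0,1]$. The plan is to observe that $\xi=1$ is always feasible: at $\xi=1$ the left side of \eqref{eq:rsav-quad} is $-(\tilde r_i^{\,k+1}-r_i^{\,k})^2\le 0\le\psi D_i^{\,k+1}$. The smallest feasible $\xi$ in $[0,1]$ therefore exists, and the formula with $\max\{0,\cdot\}$ selects it. A lesser point to check is positivity: $r_i^{\,k}>0$ propagates, since the predictor denominator exceeds $1$ and the relaxation is a convex combination with $E^{k+1}>0$. This keeps all quantities well defined, given $\sigma_i^{\,k}>0$ for the retained modes so that $h_i^{\,k}$ is finite.
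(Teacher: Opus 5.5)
Your proof is correct and follows the paper's route: reduce to a single mode, use the predictor to get $D_i^{\,k+1}=-2(\tilde r_i^{\,k+1}-r_i^{\,k})\tilde r_i^{\,k+1}$, then add the per-mode relaxation inequality from \eqref{eq:rsav-quad} to the exact predictor identity and sum over $i$. Your check that $\xi=1$ is always feasible is a useful addition the paper omits. One caveat on your claim that the explicit formula selects the smallest root of \eqref{eq:rsav-quad}: the paper's coefficient $c$ drops the $(\tilde r^{\,k+1}-r^{k})^2$ term, so the formula actually picks the smallest admissible $\xi$ for the stricter condition $(r^{k+1})^2-(\tilde r^{\,k+1})^2\le\psi D^{k+1}$; that $\xi$ still satisfies \eqref{eq:rsav-quad}, which is all your argument needs.
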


\begin{proof}
Fix an iteration $k$ and a single singular direction $Q_i^{\,k}$.
The update in one sigular direction in Section \ref{subsec:muon-rsav} is
exactly the RSAV predictor with step $h=h_i^{\,k}$ in Section \ref{sec:sav-rsav}:
\[
\tilde r_i^{\,k+1}
=\frac{r_i^{\,k}}{1+\frac{h_i^{\,k}}{2}(g_i^{\,k})^2},
\qquad
\big\langle W^{k+1}-W^{k},Q_i^{\,k}\big\rangle_F
=-h_i^{\,k}\,\tilde r_i^{\,k+1}\,g_i^{\,k}.
\]
Therefore the standard identity from Section \ref{sec:sav-rsav} holds per mode:
\[
D_i^{\,k+1}=\frac{1}{h_i^{\,k}}\big\langle W^{k+1}-W^{k},Q_i^{\,k}\big\rangle_F^{2}
= -\,2\bigl(\tilde r_i^{\,k+1}-r_i^{\,k}\bigr)\tilde r_i^{\,k+1}.
\]
With the RSAV relaxation
$r_i^{\,k+1}=\xi_i^{\,k}\tilde r_i^{\,k+1}+(1-\xi_i^{\,k})E^{k+1}$ and the same quadratic
choice of $\xi_i^{\,k}$ as in Section \ref{sec:sav-rsav}, the single-mode inequality
\[
\bigl(r_i^{\,k+1}\bigr)^2-\bigl(r_i^{\,k}\bigr)^2
\;\le\; -\,(1-\psi)\,D_i^{\,k+1}
\]
follows verbatim from the proof in Section \ref{sec:sav-rsav} by replacing $h\mapsto h_i^{\,k}$ and
$g\mapsto g_i^{\,k}$.
Finally, the Frobenius-orthonormality $\langle Q_i^{\,k},Q_j^{\,k}\rangle_F=\delta_{ij}$
ensures that $D_i^{\,k+1}$ is additive across $i$ (no cross terms at iteration $k$),
so summing the single-mode inequalities yields the final statement: 
\[
 M^{k+1}- M^{k}
\;\le\; -\,(1-\psi)\sum_{i=1}^{k_r} D_i^{\,k+1}
\;\le\; 0.
\]
\end{proof}

\begin{theorem}[Positivity and uniform lower bound for \(r_i^{\,k}\)]
\label{thm:r-positive}
Assume \(\kappa>0\) and that the objective is bounded below:
\[
f_* \;:=\; \inf_{\Theta} f(\Theta) \;>\; -\infty.
\]
Initialize \(r_i^{\,0}=E^{0}=\sqrt{f(\Theta^{0})+\kappa}\).
Consider the per–mode Muon--RSAV update
\[
\tilde r_i^{\,k+1}
=\frac{r_i^{\,k}}{\,1+\dfrac{h_i^{\,k}}{2}\,(g_i^{\,k})^{2}\,},\qquad
r_i^{\,k+1}=\xi_i^{\,k}\,\tilde r_i^{\,k+1}+(1-\xi_i^{\,k})\,E^{k+1},
\]
with \(g_i^{\,k}=\sigma_i^{\,k}/E^{k}\), \(E^{k}=\sqrt{f(\Theta^{k})+\kappa}\), and \(\xi_i^{\,k}\in[0,1]\).
Then for all \(k\ge 0\),
\[
r_i^{\,k} \;>\; 0
\qquad\text{and}\qquad
\tilde r_i^{\,k+1} \;>\; 0
\]
\end{theorem}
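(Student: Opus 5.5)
The plan is to argue by induction on \(k\), establishing both positivity statements together. The key preliminary fact is that \(E^{k}=\sqrt{f(\Theta^{k})+\kappa}\) is well defined and strictly positive for every \(k\). Since \(f(\Theta^{k})\ge f_*\), we have \(E^{k}\ge\sqrt{f_*+\kappa}\). This lower bound is positive once \(\kappa\) is chosen with \(f_*+\kappa>0\), which is the standing convention of Section~\ref{sec:sav-rsav} that \(f+\kappa>0\). For nonnegative losses, as in PINN or DeepONet training, \(f_*\ge 0\), so \(\kappa>0\) suffices. I will state this explicitly, since it is the only place the lower-boundedness assumption enters.

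For the base case, \(r_i^{\,0}=E^{0}>0\) by the preliminary fact.

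For the inductive step, assume \(r_i^{\,k}>0\). The predictor denominator is
\[
1+\tfrac{h_i^{\,k}}{2}(g_i^{\,k})^2=1+\tfrac{\eta}{2}\,\tfrac{\sigma_i^{\,k}}{(E^{k})^2}.
\]
For every retained mode we have \(\sigma_i^{\,k}>0\), because \(h_i^{\,k}=\eta/\sigma_i^{\,k}\) is only defined on such modes. Together with \(\eta>0\) and \(E^{k}>0\), this shows the denominator is finite and strictly greater than \(1\). Hence
\[
0<\tilde r_i^{\,k+1}<r_i^{\,k}.
\]
Next, \(r_i^{\,k+1}=\xi_i^{\,k}\tilde r_i^{\,k+1}+(1-\xi_i^{\,k})E^{k+1}\) with \(\xi_i^{\,k}\in[0,1]\) is a convex combination of two strictly positive numbers. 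It is therefore strictly positive, and in fact \(r_i^{\,k+1}\ge\min\{\tilde r_i^{\,k+1},E^{k+1}\}>0\). This closes the induction.

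The only subtle point is the mode bookkeeping, which I expect to be the main (minor) obstacle. The SVD is recomputed at each iteration, so the \(i\)-th auxiliary variable is carried over to whatever direction is \(i\)-th at iteration \(k+1\). The induction must therefore be carried out on the index \(i\) rather than on a fixed direction. This is harmless, because the argument uses only \(r_i^{\,k}>0\), \(\sigma_i^{\,k}>0\) and \(E^{k},E^{k+1}>0\).

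If a quantitative uniform bound is also wanted, as the title suggests, I would try to combine the convex-combination bound \(r_i^{\,k+1}\ge\min\{\tilde r_i^{\,k+1},\sqrt{f_*+\kappa}\}\) with the predictor contraction. The contraction factor need not be bounded away from zero uniformly, so I would only claim positivity at each \(k\), which is what the theorem asserts.
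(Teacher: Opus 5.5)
Your proof is correct and follows the same route as the paper. The paper first lower-bounds $E^k$ via $f_*$, then inducts on $k$, using that the predictor denominator exceeds $1$ and that the relaxation is a convex combination of positive numbers. Your version also tightens two loose steps in the paper's argument: you state explicitly that $f_*+\kappa>0$ is required (the paper infers it from $\kappa>0$ alone), and your bound $r_i^{\,k+1}\ge\min\{\tilde r_i^{\,k+1},E^{k+1}\}>0$ still gives strict positivity when $\xi_i^{\,k}=1$, whereas the paper's $r_i^{\,k+1}\ge(1-\xi_i^{\,k})E^{k+1}\ge 0$ does not.
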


\begin{proof}
By assumption \(f_*>-\infty\), hence \(f(\Theta^{k})+\kappa \ge \kappa+f_* > 0\) for all \(k\), so
\(
E^{k}=\sqrt{f(\Theta^{k})+\kappa}\ge \sqrt{\kappa+f_*}>0.
\)
Induction: \(r_i^{\,0}=E^{0}>0\). If \(r_i^{\,k}>0\), then
\(
\tilde r_i^{\,k+1}=\dfrac{r_i^{\,k}}{1+\frac{h_i^{\,k}}{2}(g_i^{\,k})^{2}} \in (0,r_i^{\,k}],
\)
and the relaxation is a convex combination of positive terms:
\[
r_i^{\,k+1}
=\xi_i^{\,k}\,\tilde r_i^{\,k+1}+(1-\xi_i^{\,k})\,E^{k+1}
\;\ge\; (1-\xi_i^{\,k})\,E^{k+1}
\;\ge\; 0.
\]
Hence \(r_i^{\,k+1}>0\).
\end{proof}

\begin{remark}
The assumption \(f_*>-\infty\), is mild and holds for standard losses (e.g., MSE, cross-entropy), where typically \(f_*=0\).
If only a known bound \(f(\Theta)\ge -B\) is available, choosing \(\kappa>B\) guarantees the same uniform positivity via \(\sqrt{\kappa-B}\).
\end{remark}

Now we move to the dissipation of the original energy for sufficiently small step size (one singular direction). We first list the standing assumptions used in this theorem, then prove a one–mode descent result. The extension to the block level follows by summation across modes, as explained at the beginning of Section 3.3.
\begin{enumerate}
\item[(A1)] \textbf{$L$–smoothness:}
The loss function $f:\mathbb{R}^{m\times n}\to\mathbb{R}$ has $L$–Lipschitz continuous gradient with respect to the Frobenius norm, i.e.
\[
\|\nabla f(X) - \nabla f(Y)\|_F \le L\,\|X - Y\|_F
\quad \text{for all } X,Y.
\]
Equivalently, $f$ satisfies the standard descent inequality
\[
f(Y) \le f(X) + \langle \nabla f(X),\, Y - X \rangle_F + \frac{L}{2}\|Y - X\|_F^{2},
\]
which will be used in the analysis below.

\item[(A2)] \textbf{Lower-bounded loss and shift.} $f_*:=\inf_{\Theta} f(\Theta)>-\infty$ and $\kappa>0$, so that
$E^{k}=\sqrt{f(\Theta^{k})+\kappa}\ge \sqrt{\kappa+f_*}>0$.
\end{enumerate}

\begin{theorem}[One–mode dissipation of the original energy for a sufficiently small step]
\label{thm:orig-descent-one-mode}
Under assumptions (A1) and (A2), the one–step decrease
\(
f(\Theta^{k+1})\le f(\Theta^{k})
\)
holds in the singular direction $Q_i^{\,k}$ provided the stepsize parameter $\eta$ satisfies
\eqref{eq:eta-local-condition}, i.e.
\[
\eta \;\le\; \frac{2\,\sigma_i^{\,k}\,E^{k}}{L\,\tilde r_i^{\,k+1}}.
\]
\end{theorem}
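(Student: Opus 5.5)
We work in the single-mode setting announced at the beginning of Section 3.3. The parameter increment restricted to the singular direction $Q_i^{\,k}$ is
\[
\Delta_i := -\,\frac{\eta}{E^{k}}\,\tilde r_i^{\,k+1}\,Q_i^{\,k},
\]
so that $\langle W^{k+1}-W^{k},Q_i^{\,k}\rangle_F=-h_i^{\,k}\tilde r_i^{\,k+1}g_i^{\,k}$, exactly as in Section \ref{subsec:muon-rsav}. The plan is to insert this increment into the descent inequality of (A1) with $X=W^{k}$ and $Y=W^{k}+\Delta_i$, and then to read off the stepsize condition that makes the right-hand side nonpositive. Throughout, the other blocks of $\Theta$ are held fixed.

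\textbf{Linear term.} We use $\langle G^{k},Q_i^{\,k}\rangle_F=\sigma_i^{\,k}$, which follows from $G^{k}=\sum_j\sigma_j^{\,k}Q_j^{\,k}$ and Frobenius-orthonormality. This gives
\[
\langle G^{k},\Delta_i\rangle_F=-\frac{\eta\,\sigma_i^{\,k}\,\tilde r_i^{\,k+1}}{E^{k}}.
\]

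\textbf{Quadratic term.} Since $\|Q_i^{\,k}\|_F=1$,
\[
\tfrac{L}{2}\|\Delta_i\|_F^2=\tfrac{L}{2}\,\eta^2\,(\tilde r_i^{\,k+1})^2/(E^{k})^2.
\]

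\textbf{Combination.} Adding the two terms yields
\[
f(W^{k}+\Delta_i)-f(W^{k})\le -\frac{\eta\,\tilde r_i^{\,k+1}}{E^{k}}\Bigl(\sigma_i^{\,k}-\frac{L\,\eta\,\tilde r_i^{\,k+1}}{2E^{k}}\Bigr).
\]
By Theorem \ref{thm:r-positive} we have $\tilde r_i^{\,k+1}>0$, and by (A2) we have $E^{k}>0$. The prefactor is therefore nonnegative, and the bracket is nonnegative exactly when $\eta\le 2\sigma_i^{\,k}E^{k}/(L\,\tilde r_i^{\,k+1})$, which is the stated condition \eqref{eq:eta-local-condition}. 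Under this condition the one-mode decrease $f(\Theta^{k+1})\le f(\Theta^{k})$ follows. If the inequality is strict, the decrease is strict whenever $\sigma_i^{\,k}>0$. Retained modes have $\sigma_i^{\,k}>0$ anyway, since otherwise $h_i^{\,k}$ would be undefined.

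\textbf{Main obstacle.} The statement claims descent only ``in the singular direction $Q_i^{\,k}$'', so we must be explicit that we prove decrease for the one-mode update $W^{k}\mapsto W^{k}+\Delta_i$. The genuine block update is $\sum_i\Delta_i$. For that update the linear terms add without cross terms by orthonormality, but the quadratic term is $\tfrac{L}{2}\eta^2\sum_i(\tilde r_i^{\,k+1})^2/(E^{k})^2$. The per-mode conditions are therefore sufficient only for a block-level descent condition that sums all modes against the same factor $L/2$. To handle this, we would note that requiring $\eta\le 2\sigma_i E/(L\tilde r_i)$ for every $i$ gives $L\eta\tilde r_i^2/(2E)\le\sigma_i\tilde r_i$ termwise. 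Summing these termwise bounds yields the block inequality directly, because the quadratic term is itself a sum over modes. This makes the extension ``by summation'' rigorous without any factor-of-$k_r$ loss.
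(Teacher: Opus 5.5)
Your proposal is correct and follows the paper's proof essentially line for line. Both insert the one-mode increment $-(\eta/E^{k})\tilde r_i^{\,k+1}Q_i^{\,k}$ into the descent inequality of (A1), use $\langle G^{k},Q_i^{\,k}\rangle_F=\sigma_i^{\,k}$ and $\|Q_i^{\,k}\|_F=1$, and read off the stepsize condition. Your termwise summation for the block update matches the remark after the theorem, and your explicit use of $\tilde r_i^{\,k+1}>0$ and $E^{k}>0$ supplies the sign facts that the equivalence in \eqref{eq:eta-local-condition} relies on without stating them.
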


\begin{proof}
Fix an iteration $k$ and a retained singular direction $Q_i^{\,k}$ with
$\langle Q_i^{\,k},Q_i^{\,k}\rangle_F=1$ and coefficient $\sigma_i^{\,k}=\langle G^{k},Q_i^{\,k}\rangle_F$, where $G^{k}=\nabla_W f(\Theta^{k})$.
With $h_i^{\,k}=\eta/\sigma_i^{\,k}$ and $g_i^{\,k}=\sigma_i^{\,k}/E^{k}$, define
\[
\tilde r_i^{\,k+1}=\frac{r_i^{\,k}}{1+\frac{h_i^{\,k}}{2}(g_i^{\,k})^2},
\qquad
W^{k+1}=W^{k}-\frac{\eta}{E^{k}}\tilde r_i^{\,k+1}Q_i^{\,k}.
\]
As in Section 3.1, the relaxation that produces $r_i^{\,k+1}$ is performed after updating $W$ and is not needed here.

\noindent By (A1) with $Y:=W^{k+1}-W^{k}=-(\eta/E^{k})\tilde r_i^{\,k+1}Q_i^{\,k}$ and $\|Q_i^{\,k}\|_F=1$,
\begin{align}
f(\Theta^{k+1})
&\le f(\Theta^{k}) + \Big\langle G^{k},\; -\frac{\eta}{E^{k}}\tilde r_i^{\,k+1}Q_i^{\,k}\Big\rangle_F
\;+\; \frac{L}{2}\,\Big\|\frac{\eta}{E^{k}}\tilde r_i^{\,k+1}Q_i^{\,k}\Big\|_F^{2} \notag\\
&= f(\Theta^{k}) - \frac{\eta}{E^{k}}\tilde r_i^{\,k+1}\,\sigma_i^{\,k}
\;+\; \frac{L}{2}\,\frac{\eta^{2}}{(E^{k})^{2}}\,(\tilde r_i^{\,k+1})^{2}.
\label{eq:one-mode-descent-template}
\end{align}
Thus, a sufficient local condition for $f(\Theta^{k+1})\le f(\Theta^{k})$ in this mode is
\begin{equation}\label{eq:eta-local-condition}
\frac{\eta}{E^{k}}\tilde r_i^{\,k+1} \;\le\; \frac{2\,\sigma_i^{\,k}}{L}
\quad\Longleftrightarrow\quad
\;\eta \;\le\; \frac{2\,\sigma_i^{\,k}\,E^{k}}{L\,\tilde r_i^{\,k+1}}\; .
\end{equation}
\end{proof}

Note that the local condition \eqref{eq:eta-local-condition} is verifiable at runtime because it uses current quantities $(\sigma_i^{\,k},E^{k},\tilde r_i^{\,k+1})$.
For a uniform (iterate–independent) choice of $\eta$, one may use the truncation of the singular values. If we retain only modes with $\sigma_i^{\,k}\ge \sigma_{\min}>0$ and enforce a predictor cap $\tilde r_i^{\,k+1}\le C_r\,E^{k}$ with some $C_r\ge 1$ (e.g., by clipping the ratio $\tilde r_i^{\,k+1}/E^{k}$), then a single, iterate–independent stepsize
\begin{equation}\label{eq:global-eta}
\eta \;\le\; \frac{2\,\sigma_{\min}}{L\,C_r}
\end{equation}
guarantees $f(\Theta^{k+1})\le f(\Theta^{k})$ in every retained mode and hence in total.

\begin{remark}
Summing \eqref{eq:one-mode-descent-template} across the $k_r$ modes, Frobenius–orthonormal directions yields
\(
f(\Theta^{k+1}) \le f(\Theta^{k}) - \frac{\eta}{E^{k}}\sum_{i}\tilde r_i^{\,k+1}\sigma_i^{\,k}
+ \frac{L}{2}\frac{\eta^{2}}{(E^{k})^{2}}\sum_{i}(\tilde r_i^{\,k+1})^{2}.
\)
Thus, if \eqref{eq:eta-local-condition} holds for all retained $i$, the total energy decreases.
This is the precise sense in which proving the one–mode inequality suffices at iteration $k$.
\end{remark}

We now analyze the convergence behavior of the proposed Muon--RSAV algorithm under the same conditions (A1)--(A2) introduced previously, together with an additional regularity assumption stated below.

\begin{enumerate}
\item[(A3)] (\textbf{the Polyak--Łojasiewicz (PL) condition}:) There exists $\mu>0$ such that
\[
\tfrac{1}{2}\|\nabla f(\Theta)\|_F^2 \ge \mu \big(f(\Theta)-f_*\big), \quad \forall\, \Theta.
\]
where $f_*=\inf_{\Theta} f(\Theta)>-\infty$. 
This condition generalizes strong convexity and is commonly satisfied by many overparameterized neural networks and PDE-constrained learning problems.
\end{enumerate}

\begin{theorem}[Convergence and linear rate]
\label{thm:convergence}
Let assumptions (A1)--(A3) hold. 
Then the sequence $\{f(\Theta^k)\}$ generated by the proposed Muon--RSAV method is nonincreasing and convergent. 
Moreover,
\[
\lim_{k\to\infty}\|\nabla f(\Theta^k)\|_F = 0.
\]
If, in addition, the PL condition (A3) holds, 
the method converges linearly to the optimal value $f_*$, i.e.,
\[
f(\Theta^{k+1})-f_* \;\le\; (1-\rho)\,\big(f(\Theta^k)-f_*\big),
\quad
\rho \;=\; \frac{2\tau(2-\tau)}{L}\,\underline E\,c_0\,\mu \in (0,1],
\]
where $\tau\in(0,2]$ is the step–scaling parameter, 
$\underline E=\sqrt{f_*+\kappa}>0$, 
and $c_0=\cos^2\!\angle(\tilde r^{\,k+1},\sigma^{k})\in(0,1]$ 
quantifies the alignment between $\tilde r^{\,k+1}$ and the gradient singular spectrum.
\end{theorem}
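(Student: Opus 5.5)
The plan is to make the ``step--scaling parameter'' $\tau$ explicit and start from the summed descent inequality in the Remark after Theorem~\ref{thm:orig-descent-one-mode}. I will interpret $\tau$ by writing the effective per-mode displacement $\alpha_i^k:=\eta\tilde r_i^{\,k+1}/E^k$ as $\alpha_i^k=\tau\,\sigma_i^{\,k}/L\cdot(\text{normalization})$. Then \eqref{eq:eta-local-condition} corresponds exactly to $\tau\le 2$, in the same way that the Lipschitz step $1/L$ corresponds to $\tau=1$.

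Write $\tilde r:=(\tilde r_i^{\,k+1})_i$, $\sigma:=(\sigma_i^{\,k})_i$ and $a:=\eta/E^k$. The Remark gives
\[
f(\Theta^{k+1})\le f(\Theta^k)-a\langle \tilde r,\sigma\rangle+\tfrac{L}{2}a^2\|\tilde r\|^2 .
\]
Choosing the scale so that $a\|\tilde r\|=\tau\langle\tilde r,\sigma\rangle/(L\|\tilde r\|)$ turns the right-hand side into
\[
f(\Theta^k)-\frac{\tau(2-\tau)}{2L}\,\frac{\langle\tilde r,\sigma\rangle^2}{\|\tilde r\|^2}
= f(\Theta^k)-\frac{\tau(2-\tau)}{2L}\,c_0\,\|\sigma\|^2 ,
\]
where $c_0=\cos^2\angle(\tilde r,\sigma)$. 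Positivity from Theorem~\ref{thm:r-positive} gives $\langle\tilde r,\sigma\rangle>0$, so $c_0>0$. Since $\|\sigma\|^2=\|G^k\|_F^2$ when the full rank is retained, and is at least $(k_r/\mathrm{rank})\|G^k\|_F^2$ otherwise, this yields a sufficient decrease $f(\Theta^{k+1})\le f(\Theta^k)-C\|\nabla f(\Theta^k)\|_F^2$ with $C>0$. I expect the factor $\underline E$ in $\rho$ to enter through the lower bound $E^k\ge\underline E$ from (A2), in the form $1/E^k$ versus $E^k$, when the scaling is written in terms of $\eta$ rather than $a$. I would track this so that the stated constant comes out, absorbing any harmless normalization into $\tau$.

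Monotonicity then follows, because $f$ is bounded below by (A2). Telescoping the sufficient-decrease inequality gives $\sum_k C\|\nabla f(\Theta^k)\|_F^2\le f(\Theta^0)-f_*<\infty$, hence $\|\nabla f(\Theta^k)\|_F\to0$. For the linear rate I insert (A3), $\|\nabla f\|_F^2\ge 2\mu(f-f_*)$, and subtract $f_*$ to get $f(\Theta^{k+1})-f_*\le(1-\rho)(f(\Theta^k)-f_*)$ with $\rho$ of the stated form. Showing $\rho\le1$ uses $f(\Theta^{k+1})\ge f_*$: if $\rho$ exceeded $1$ the inequality would force $f(\Theta^{k+1})<f_*$, a contradiction, unless $f(\Theta^k)=f_*$ already.

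The main obstacle is that the actual method does not choose $\eta$ per iteration, since $\tilde r$ is determined by the RSAV predictor. The scaling identity above therefore has to be imposed as a step-size rule, admissible via \eqref{eq:eta-local-condition} and \eqref{eq:global-eta}, or replaced by an inequality: for any $a\in[\tau_1,\tau_2]\cdot\langle\tilde r,\sigma\rangle/(L\|\tilde r\|^2)$ the decrease is at least $\min\tau(2-\tau)$ times the same quantity. A second point is that $c_0$ must be bounded below uniformly in $k$. I would either assume this, which is how I read the statement, or argue it from positivity of all entries of $\tilde r$ and $\sigma$ together with the cap $\tilde r_i\le C_rE^k$ and the floor $\sigma_i\ge\sigma_{\min}$, which bound the angle away from $\pi/2$.
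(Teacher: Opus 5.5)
Your skeleton is the paper's: sum the one-mode inequalities, take $\eta_k=\tau\eta_k^*$ with $\eta_k^*=\frac{E^k}{L}\frac{S_1^k}{S_2^k}$, extract the factor $\tau(2-\tau)$, then use $c_0$ and (A3).

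\textbf{The step that fails.} You plan to ``track'' $E^k$ and absorb a normalization into $\tau$ so that the stated $\rho=\frac{2\tau(2-\tau)}{L}\underline E\,c_0\mu$ comes out. This cannot be done. Your own algebra is right: $E^k$ cancels identically, whether you parametrize by $\eta$ or by $a=\eta/E^k$, and the decrease is exactly $\frac{\tau(2-\tau)}{2L}\frac{(S_1^k)^2}{S_2^k}$. The paper reaches its constant only through two slips. First, \eqref{eq:decrease} carries a spurious factor $E^k$, which is then bounded below by $\underline E$. Second, inserting $\|\nabla f\|_F^2\ge 2\mu(f-f_*)$ into $\frac{\tau(2-\tau)}{2}$ gives $\tau(2-\tau)$, not $2\tau(2-\tau)$. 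Rescaling $\tau$ does not rescue it, since $\tau$ \emph{is} the step scaling in the statement.

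\textbf{The stated contraction is false.} Take $f(W)=\frac L2\|W\|_F^2$, so $f_*=0$ and $\mu=L$, with $\kappa=1$, so $\underline E=1$. Let $W^k=s\,uv^\top$ with $s>0$ and unit vectors $u,v$. There is one mode, with $\sigma_1^k=Ls$ and $c_0=1$. The rule $\eta_k=\tau\eta_k^*$ means $\frac{\eta_k}{E^k}\tilde r_1^{\,k+1}=\tau\sigma_1^k/L=\tau s$, hence $W^{k+1}=(1-\tau)W^k$ and
\[
f(\Theta^{k+1})=(1-\tau)^2 f(\Theta^k)=\bigl(1-\tau(2-\tau)\bigr)f(\Theta^k),
\]
whereas the theorem asserts the factor $1-2\tau(2-\tau)$. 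For $\tau=0.1$ this gives $\rho=0.38\in(0,1]$, but the true factor is $0.81>0.62$.

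\textbf{What your argument actually proves.} You get the rate $\rho=\frac{\tau(2-\tau)}{L}c_0\mu$, with an extra factor $k_r/\mathrm{rank}(G^k)$ under truncation. You handled truncation correctly; the paper silently identifies $\|\sigma^k\|_2^2$ with $\|\nabla_W f\|_F^2$. For this $\rho$, the bound $\rho\le 1$ is immediate from $\mu\le L$, $c_0\le 1$ and $\tau(2-\tau)\le 1$. Your contradiction argument for $\rho\le1$ is valid only once the contraction with that $\rho$ has been legitimately derived. The paper's $\rho$ can exceed $1$, e.g.\ for large $\kappa$.

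\textbf{Where you are sounder than the paper.} The paper infers $\sum_k\|\nabla f(\Theta^k)\|_F^2<\infty$ from $S_1^k\le\sqrt{S_2^k}\,\|\nabla_W f(\Theta^k)\|_F$, which is the wrong direction. A uniform bound $c_0^k\ge\underline c>0$ is exactly what that step needs, as you say.

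\textbf{Your fallback for bounding $c_0$ does not work.} A cap on $\tilde r_i$ and a floor on $\sigma_i$ are one-sided. Bounding the angle between two positive vectors away from $\pi/2$ requires two-sided control, such as lower bounds on $\min_i\tilde r_i/\max_i\tilde r_i$ and on $\sigma_{k_r}/\sigma_1$. Theorem~\ref{thm:r-positive} gives only positivity of $r_i^k$, not a uniform lower bound. Moreover, a fixed floor $\sigma_{\min}$ is incompatible with $\nabla f(\Theta^k)\to 0$: eventually no mode is retained and the iteration stalls.

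So two things must be taken as hypotheses: the uniform bound on $c_0$, and the solvability of the implicit rule $\eta_k=\tau\eta_k^*$ (recall that $\tilde r^{\,k+1}$ depends on $\eta_k$).
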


\begin{proof}
The one–mode descent inequality has been established in \eqref{eq:one-mode-descent-template}. 
Summing it over all retained spectral directions yields
\begin{equation}\label{eq:descent-block}
f(\Theta^{k+1}) \le f(\Theta^k)
-\frac{\eta}{E^k}\sum_i \tilde r_i^{k+1}\sigma_i^k
+\frac{L}{2}\Big(\frac{\eta}{E^k}\Big)^2\sum_i(\tilde r_i^{k+1})^2.
\end{equation}
Define $S_1^k=\sum_i\tilde r_i^{k+1}\sigma_i^k$ and 
$S_2^k=\sum_i(\tilde r_i^{k+1})^2$. 
For fixed $\Theta^k$, the right-hand side of \eqref{eq:descent-block} is a convex quadratic function of~$\eta$. 
Its minimizer is 
\[
\eta_k^* = \frac{E^k}{L}\,\frac{S_1^k}{S_2^k}.
\]
Setting $\eta_k=\tau\,\eta_k^*$ with $\tau\in(0,2]$ yields
\begin{equation}\label{eq:decrease}
f(\Theta^{k+1})-f(\Theta^k)
\;\le\;
-\frac{\tau(2-\tau)}{2}\,\frac{E^k}{L}\,\frac{(S_1^k)^2}{S_2^k}.
\end{equation}
Since $E^k\ge\underline E>0$ by (A2), the right-hand side of \eqref{eq:decrease} is nonpositive, so $\{f(\Theta^k)\}$ is monotonically decreasing and bounded below by $f_*$. Therefore $f(\Theta^k)$ converges.

\noindent Next, by the Cauchy--Schwarz inequality,
\[
S_1^k = \sum_i \tilde r_i^{k+1}\sigma_i^k 
\;\le\; \sqrt{S_2^k}\, \|\nabla_W f(\Theta^k)\|_F.
\]
Summing \eqref{eq:decrease} over $k$ gives
\[
\sum_{k=0}^\infty \|\nabla f(\Theta^k)\|_F^2 < \infty,
\]
which implies $\lim_{k\to\infty}\|\nabla f(\Theta^k)\|_F=0$. 
Hence, every limit point of $\{\Theta^k\}$ is a stationary point.

\noindent Finally, if (A3) holds, we apply the PL inequality:
\[
\frac{(S_1^k)^2}{S_2^k}
=\Big\langle \frac{\tilde r^{\,k+1}}{\|\tilde r^{\,k+1}\|_2},\,\sigma^k\Big\rangle^2
\ge c_0\,\|\sigma^k\|_2^2
=c_0\,\|\nabla_W f(\Theta^k)\|_F^2.
\]
Substituting into \eqref{eq:decrease} gives
\[
f(\Theta^{k+1})-f(\Theta^k)
\;\le\;
-\frac{2\tau(2-\tau)}{L}\,\underline E\,c_0\,\mu\,
\big(f(\Theta^k)-f_*\big),
\]
which leads to
\[
f(\Theta^{k+1})-f_*
\;\le\;
(1-\rho)\,(f(\Theta^k)-f_*),
\]
where $\rho=\tfrac{2\tau(2-\tau)}{L}\,\underline E\,c_0\,\mu\in(0,1]$.
Therefore, the sequence $\{f(\Theta^k)\}$ converges to $f_*$ at a linear rate.
\end{proof}

\begin{remark}
The factor $c_0=\cos^2\!\angle(\tilde r^{\,k+1},\sigma^{k})$ measures 
the correlation between the auxiliary predictors and the gradient spectrum. 
Empirically, both vectors are nonnegative and dominated by the top singular modes, 
so $c_0$ remains positive in practice. 
Even if $c_0$ becomes small, \eqref{eq:decrease} still guarantees monotone convergence, 
while the PL condition ensures asymptotic linear convergence when $c_0>0$.
\end{remark}

\section{Numerical Results}

\subsection{Toy example: linear regression}
We begin with a linear regression problem to illustrate the behavior of the proposed SpecMuon optimizer in a controlled convex setting. Although linear regression is well understood, it provides a transparent testbed for examining optimization dynamics and spectral effects without the additional complexities introduced by nonlinear architectures.

Specifically, we consider the least-squares problem
\[
\min_{W \in \mathbb{R}^{m \times n}} 
f(W) := \frac{1}{2}\|WX - Y\|_F^2,
\]
where $X \in \mathbb{R}^{n \times N}$ denotes the input data matrix and $Y \in \mathbb{R}^{m \times N}$ the corresponding targets. The gradient takes the closed form
\[
\nabla f(W) = (WX - Y)X^\top,
\]
which is matrix-valued and whose singular spectrum reflects correlations present in the data. Consequently, the problem exhibits anisotropic curvature and serves as a minimal example in which spectral imbalance can influence optimization behavior.

We compare standard gradient-based optimizers, including Adam, AdamW, and the original Muon, with the proposed SpecMuon method. All methods are initialized identically and tuned to their best-performing learning rates. For SpecMuon, we retain only the dominant two singular directions of the gradient and apply the mode-wise RSAV mechanism described in Section~3.

Figure~\ref{fig:linear_regression} reports the training loss trajectories. While all methods eventually converge to the global minimizer, clear differences emerge in the transient regime. Standard first-order methods exhibit slow initial progress due to curvature imbalance. SpecMuon improves convergence speed by adaptively regulating the magnitude of each spectral update through its energy-based auxiliary variables.

This toy example highlights two key aspects of the proposed method. First, even in a convex setting, spectral geometry plays a decisive role in optimization efficiency. Second, incorporating an energy-aware mechanism into spectral updates provides additional robustness without sacrificing convergence speed. These observations motivate the application of SpecMuon to more challenging physics-informed learning problems, which we investigate next.

\begin{figure}[h]
    \centering
    \includegraphics[width=0.5\textwidth]{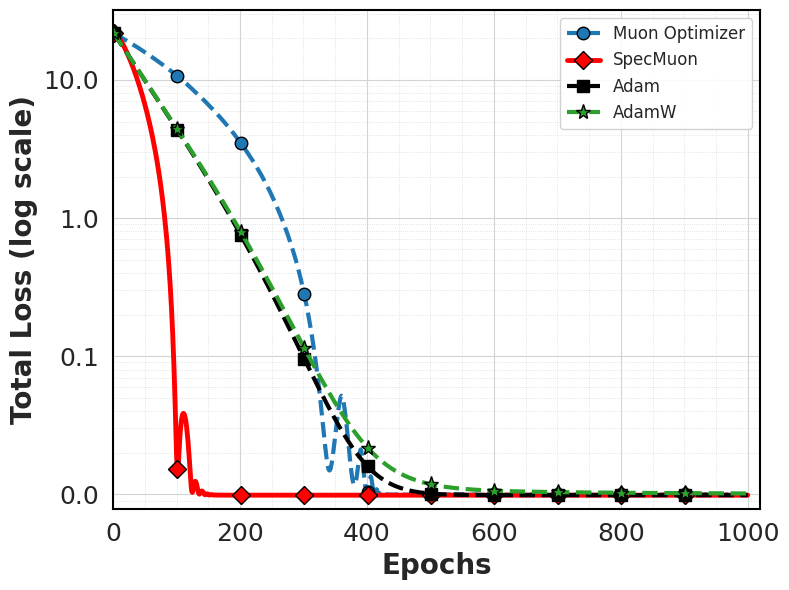}
    \caption{Training loss (log scale) for the linear regression problem. SpecMuon (red) is compared with Muon, Adam, and AdamW. SpecMuon achieves faster convergence by combining spectral orthogonalization with mode-wise energy regulation.}
    \label{fig:linear_regression}
\end{figure}

\subsection{Physics-Informed Neural Network}
We next consider the one-dimensional viscous Burgers’ equation, a nonlinear partial differential equation frequently used as a benchmark problem in scientific machine learning. The equation with Dirichlet boundary conditions is defined on the spatial domain $\Omega = [-1,1]$ and temporal domain $[0,T]$ as follows:
\begin{align}
&\frac{\partial u}{\partial t} - \nu \frac{\partial^2 u}{\partial x^2} + u \frac{\partial u}{\partial x} = 0,\quad x \in \Omega,\; t \in [0,T],\label{eq:burgers}\\[6pt]
&u(x,t) = 0,\quad \forall x \in \partial\Omega,\\[6pt]
&u(x,0) = -\sin(\pi x).
\end{align}
Here, $u(x,t)$ represents the solution over space and time, and $\nu$ is the viscosity chosen to be $0.01/\pi$. 

To ensure a fair evaluation of optimizer performance, we conducted a systematic hyperparameter tuning procedure for all compared methods: Adam, AdamW, Muon, and the proposed SpecMuon. As summarized in Table \ref{tab:hyperparameters_burger}, a grid search was performed over the learning rates to determine the optimal configuration for each optimizer, while other parameters were either set to established default values or tuned within a limited range.

For the baseline optimizers, both Adam and AdamW achieved their best performance with a learning rate of $5 \times 10^{-3}$. The Muon optimizer performed best with a learning rate of $0.02$ and momentum $0.02$. For the proposed SpecMuon method, the grid search identified a learning rate of $3\times 10^{-3}$ and momentum $0.9$ as optimal. In addition, SpecMuon introduces two hyperparameters related to spectral aggregation: $r_{top} = 6$ and $\eta_{sav} = 0.2$.

 \begin{table}[ht] \centering \caption{Hyperparameter configurations for the PINN experiment on Burgers’ equation. Learning rates were selected via grid search. Other parameters were set to standard defaults unless otherwise specified.} \label{tab:hyperparameters_burger} \begin{tabular}{@{}llcc@{}} \toprule \textbf{Optimizer} & \textbf{Hyperparameter} & \textbf{Values Searched} & \textbf{Best Value Used} \\ \midrule 

 \multirow{3}{*}{\texttt{Adam}} & Learning Rate (\text{lr}) & \{0.01, 5e-3, 1e-3, 5e-4\} & 5e-3 \\ & Betas ($\beta_1, \beta_2$) & -- & (0.9, 0.999) \\ & Weight Decay ($\lambda$) & -- & 0 \\ \midrule %
 \multirow{3}{*}{\texttt{AdamW}} & Learning Rate (\text{lr}) & \{0.01, 5e-3, 1e-3, 5e-4\} & 5e-3 \\ & Betas ($\beta_1, \beta_2$) & -- & (0.9, 0.999) \\ & Weight Decay ($\lambda$) & \{0.0, 0.01, 1e-3, 5e-4, 1e-4 \} & 5e-4 \\ \midrule %
\multirow{2}{*}{\texttt{Muon}} & Learning Rate (\text{lr}) & \{0.1, 0.05, 0.02, 5e-3\} & 0.02\\ & Momentum ($\beta$) & \{0.0, 0.02,  1e-3, 1e-4, 5e-4\} & 0.02 \\ \midrule %
{\texttt{SpecMuon}} & Learning Rate (\text{lr}) & \{0.1, 0.05, 0.01, 3e-3\} & 3e-3 \\ & Momentum ($\beta$) & \{0.9,  0.01, 0.02, 1e-3, 1e-4, 5e-4\} & 0.9  \\ & r\textsubscript{top} & \{2 - 8\} & 6 \\ & SAV Eta ($\eta_{sav}$) & \{0.2 - 0.8\} & 0.2 \\ \bottomrule
 \end{tabular} 
 \end{table} 

\begin{figure}[h]
    \centering
    \begin{subfigure}[t]{0.48\textwidth}
        \centering
        \includegraphics[width=\textwidth]{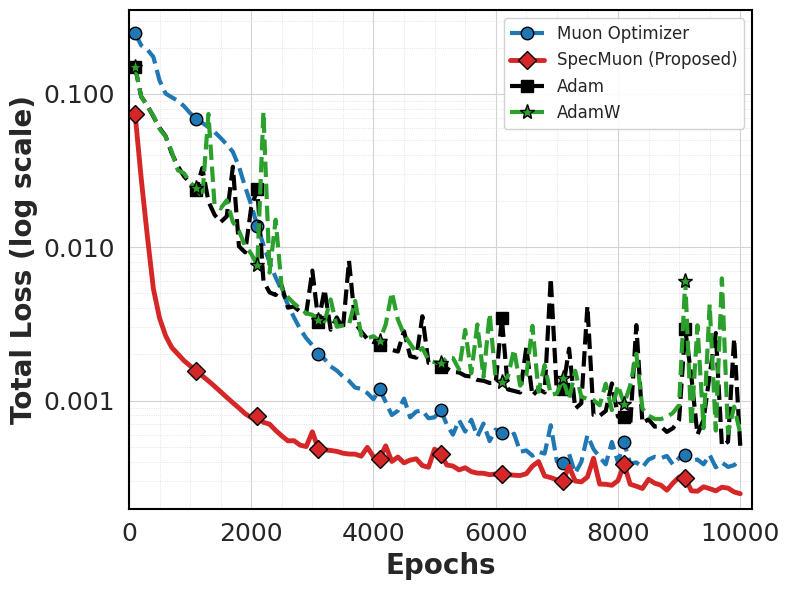}
        \caption{Training loss (log scale) comparison among Adam, AdamW, Muon, and SpecMuon.}
        \label{fig:muon_sav_PINN_burger}
    \end{subfigure}
    \hfill
    \begin{subfigure}[t]{0.48\textwidth}
        \centering
        \includegraphics[width=\textwidth]{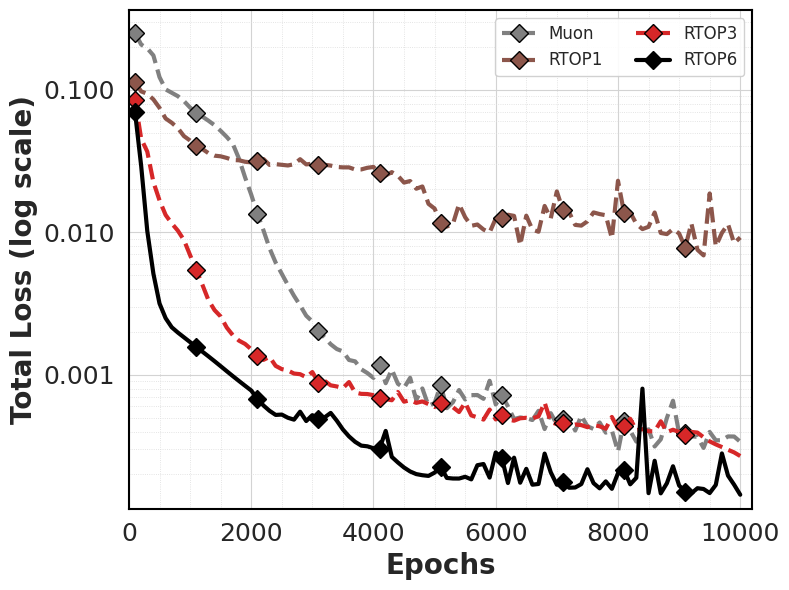}
        \caption{Ablation study on the spectral rank parameter $r_{top}$ in SpecMuon. Retaining multiple dominant singular directions improves convergence stability and speed.}
        \label{fig:muon_sav_rtop_burger}
    \end{subfigure}
    \caption{PINN training on the one-dimensional Burgers’ equation.}
    \label{fig:muon_sav_combined_burger}
\end{figure}

Figure~\ref{fig:muon_sav_PINN_burger} presents the training loss trajectories for the PINN applied to Burgers’ equation. The proposed SpecMuon optimizer (solid red curve) is compared with Adam, AdamW, and the original Muon optimizer. As shown in the figure, SpecMuon demonstrates improved convergence behavior relative to all baseline methods. In particular, it achieves faster initial descent and consistently maintains a lower loss throughout the full training duration of $10,000$ epochs. Figure~\ref{fig:muon_sav_rtop_burger} reports an ablation study on the hyperparmeter $r_{top}$, which contros the number of dominant singular values retained in the spectral aggregation step. The results indicate that incorporating multiple leading spectral directions improves convergence performance, with $r_{top} = 6$ providing a favorable balance between stability and efficiency.

The quantitative results are summarized in Table~\ref{tab:results_burger}. The proposed SpecMuon optimizer achieves the lowest final training loss and the smallest mean squared error (MSE) among all compared methods. Although SpecMuon incurs slightly higher computational cost due to spectral decomposition and auxiliary-variable updates, it delivers improved solution accuracy and more stable convergence behavior.

\begin{table}[ht] \centering \caption{Quantitative comparison for the Burgers’ PINN experiment. Reported metrics include final training loss, mean squared error (MSE), and total training time on an NVIDIA H100 80GB GPU. Best results are shown in bold.} 
\label{tab:results_burger} 
\sisetup{separate-uncertainty, detect-weight=true, detect-inline-weight=math}  
\begin{tabular}{@{}lccc@{}} 
\toprule 
\textbf{Optimizer} & \textbf{Final Training Loss} & \textbf{MSE} & \textbf{Training Time (s)} \\ 
\midrule 
\texttt{Adam} & \num{5.88(12)e-4} & \num{1.42e-3} & \num{120(12)} \\ 
\texttt{AdamW} & \num{7.46(15)e-4} & \num{1.50e-3} & \num{117(11)} \\ 
\texttt{Muon} & \num{3.39(5)e-4} & \num{1.25e-3} & \num{149(15)} \\ 
\textbf{\texttt{SpecMuon}} & \textbf{\num{1.55(4)e-4}} & \textbf{\num{5.06e-4}} & \textbf{\num{180(18)}} \\ 
\bottomrule 
\end{tabular} 
\end{table}

\begin{figure}[H]
    \centering
    \includegraphics[width=.32\linewidth]{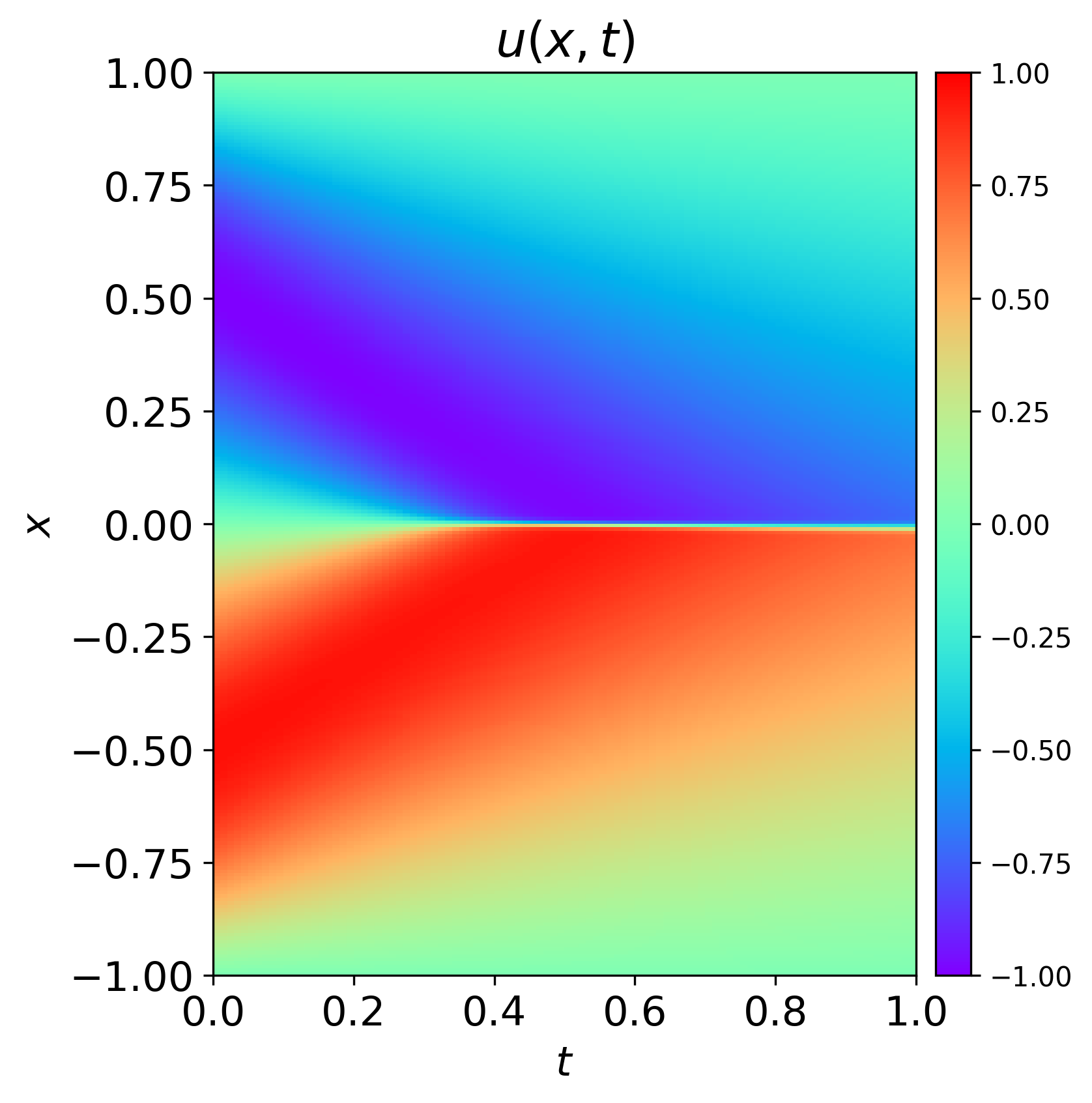}
    \includegraphics[width=.32\linewidth]{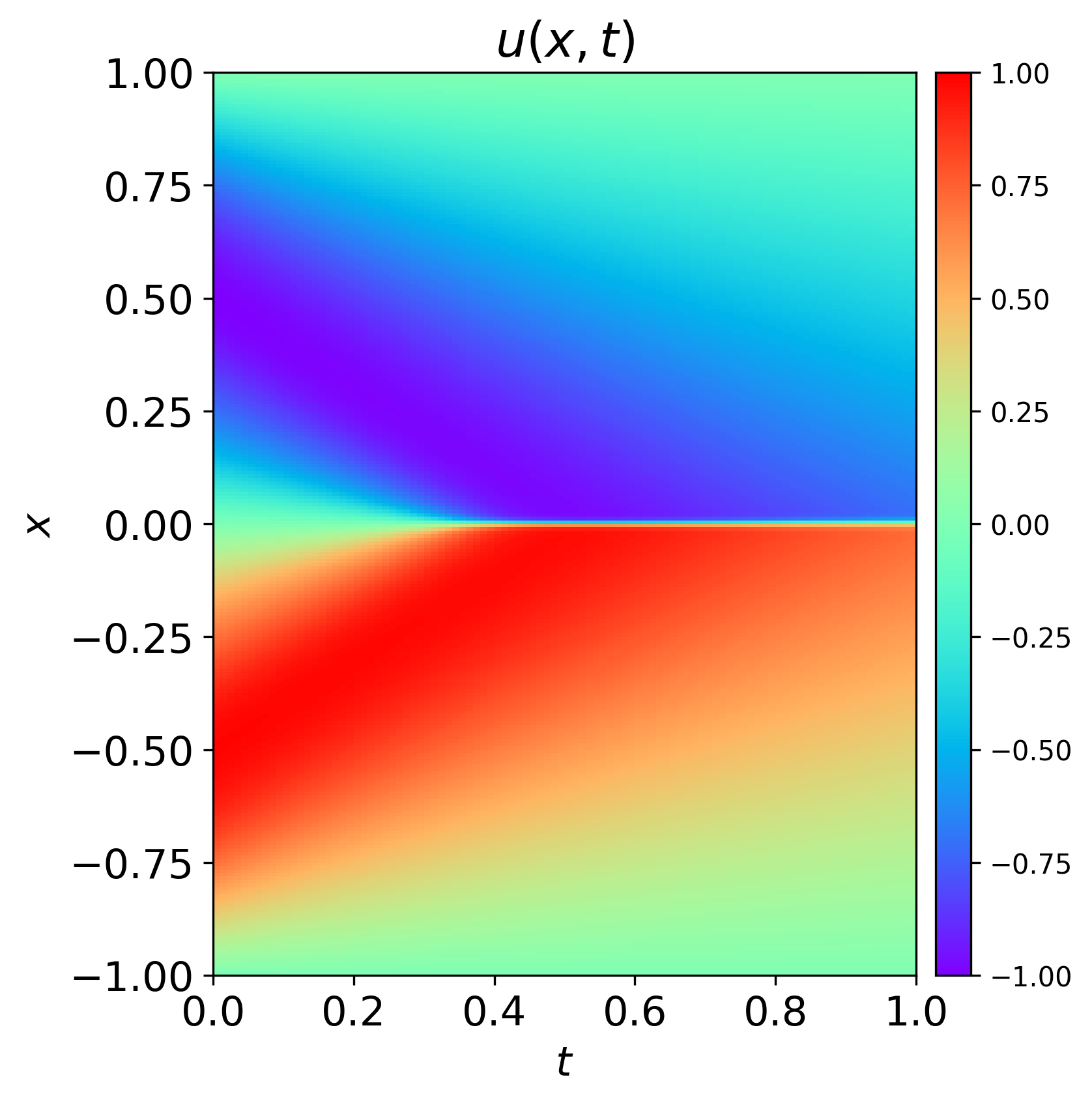}
    \includegraphics[width=.32\linewidth]{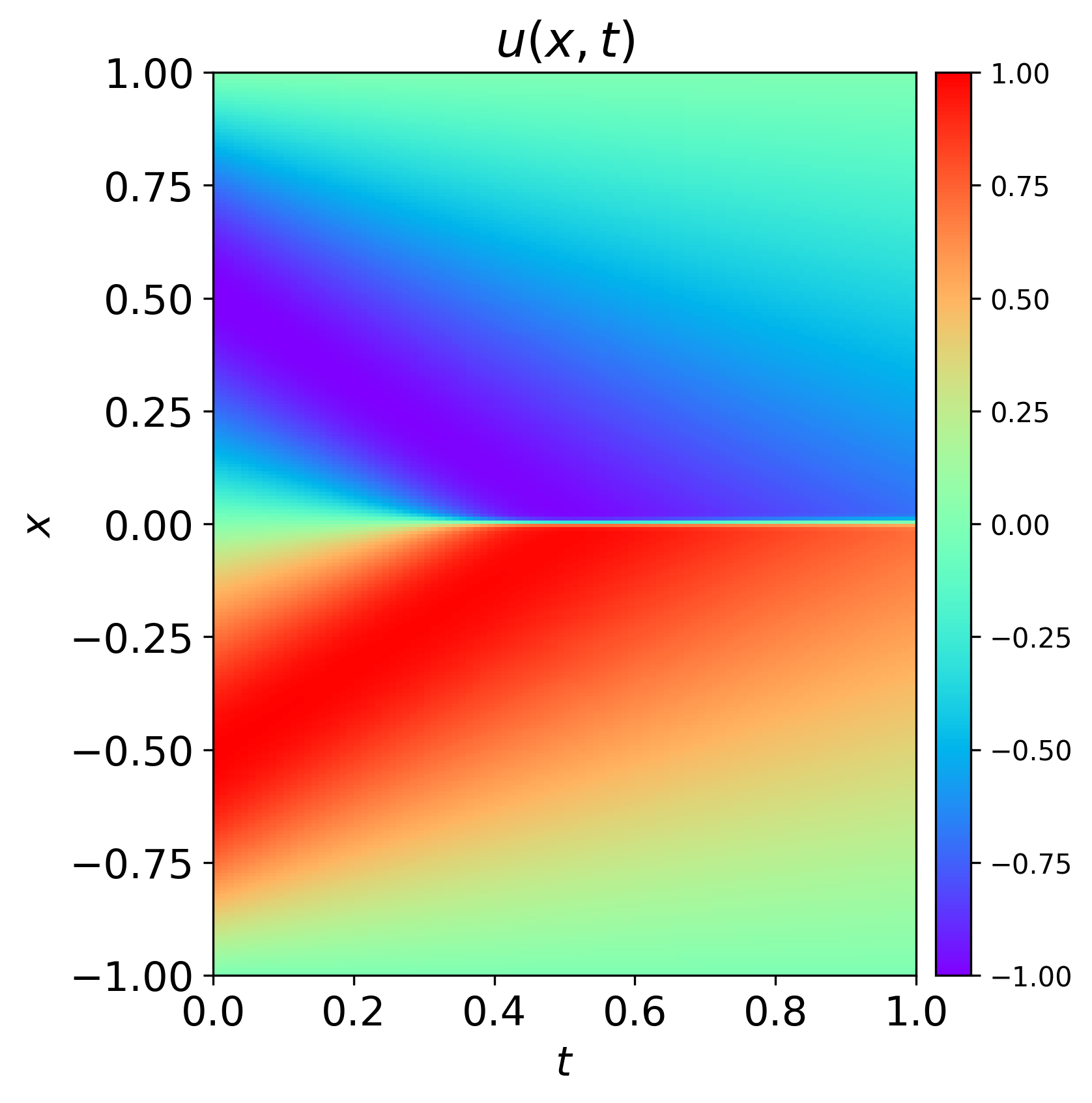}
    \caption{Spatiotemporal solution $u(x, t)$ of the Burgers’ equation obtained by PINNs trained with Adam(left), Muon(middle), and SpecMuon(right). SpecMuon produces a more accurate and smoother approximation of the reference solution.}
    \label{fig:nsga-kf-20}
\end{figure}

\begin{figure}[H]
    \centering
    \includegraphics[width=.32\linewidth, height=0.30\textheight]{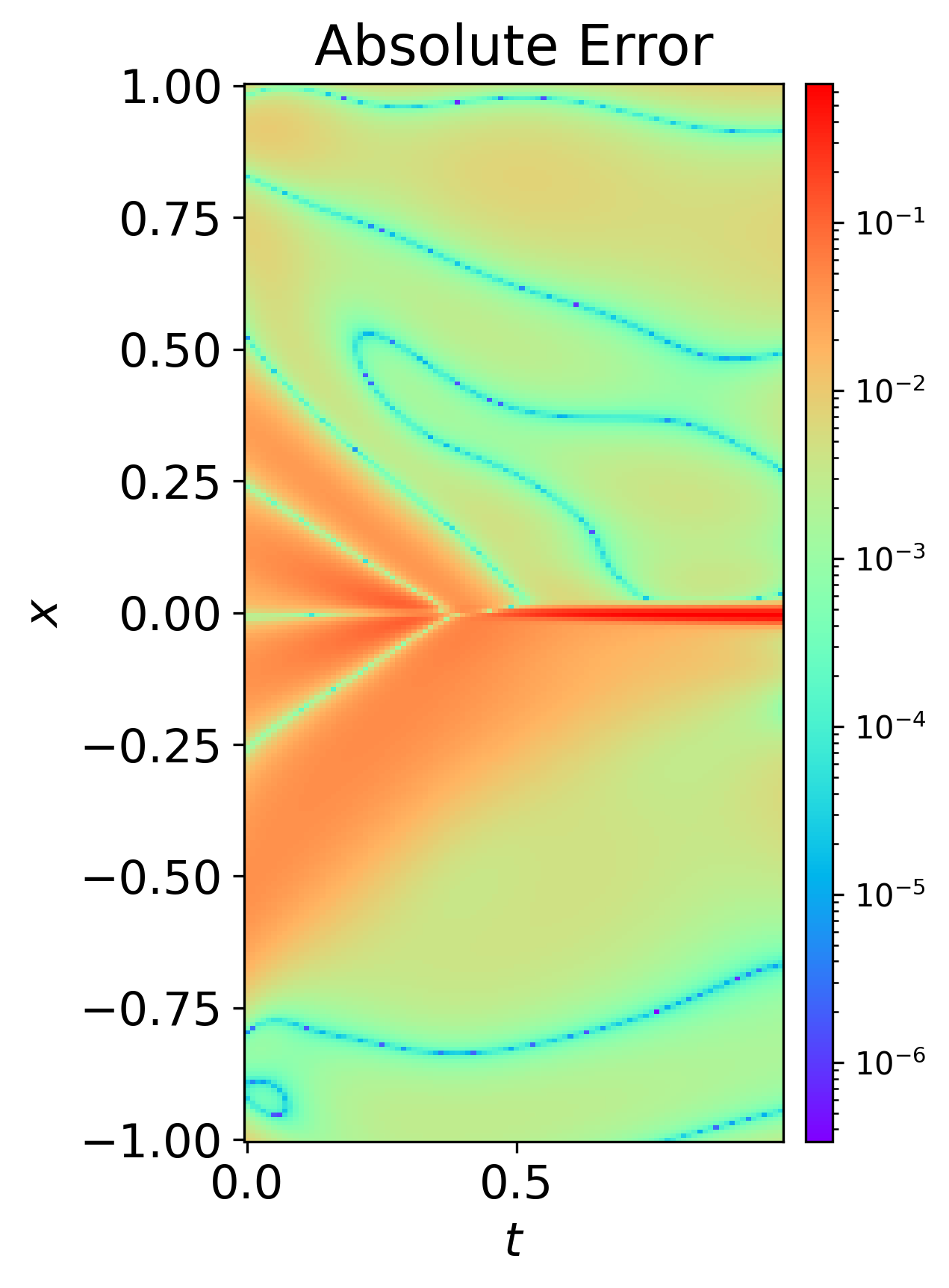}
    \includegraphics[width=.32\linewidth,height=0.30\textheight]{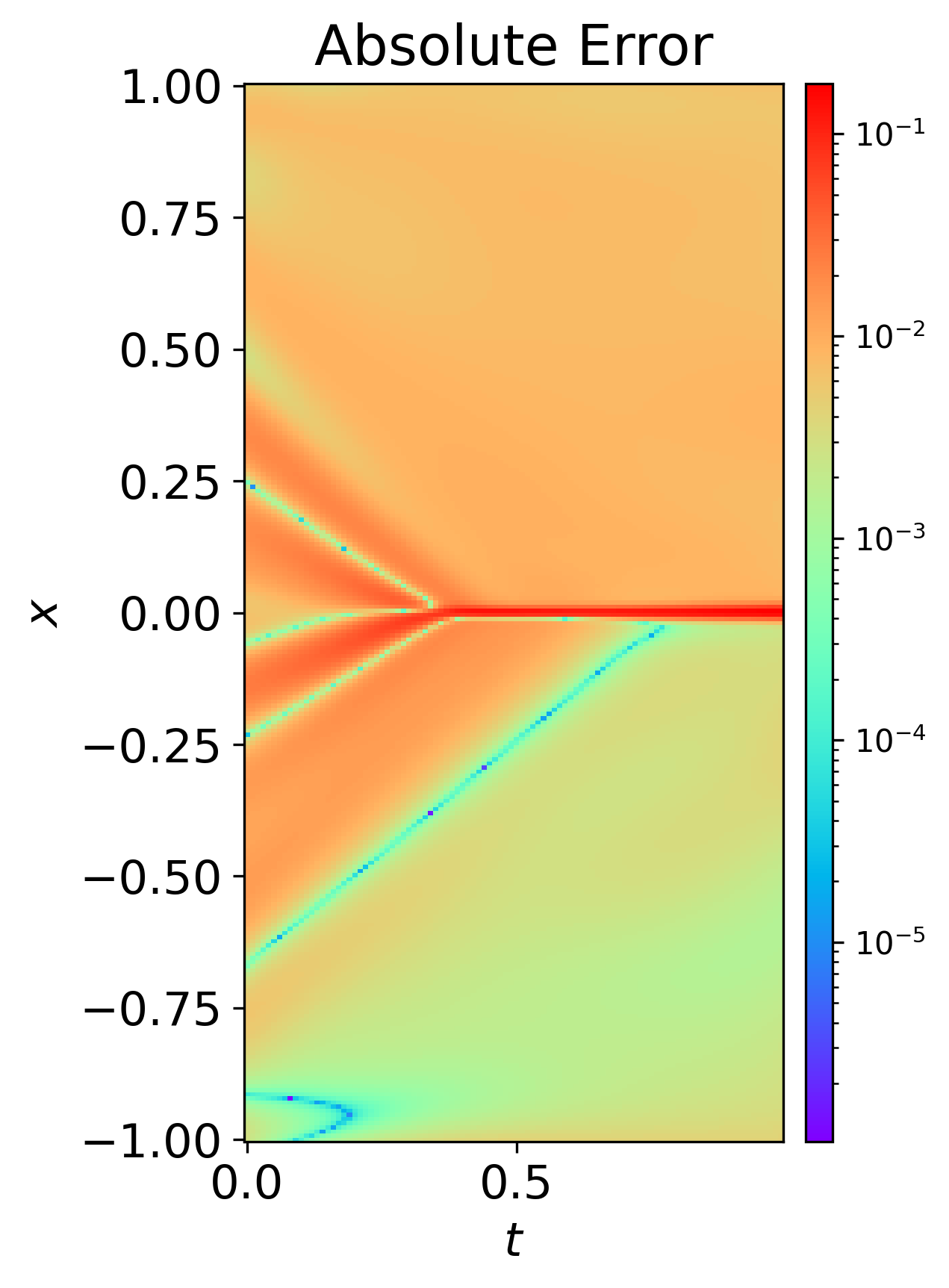}
    \includegraphics[width=.32\linewidth,height=0.30\textheight]{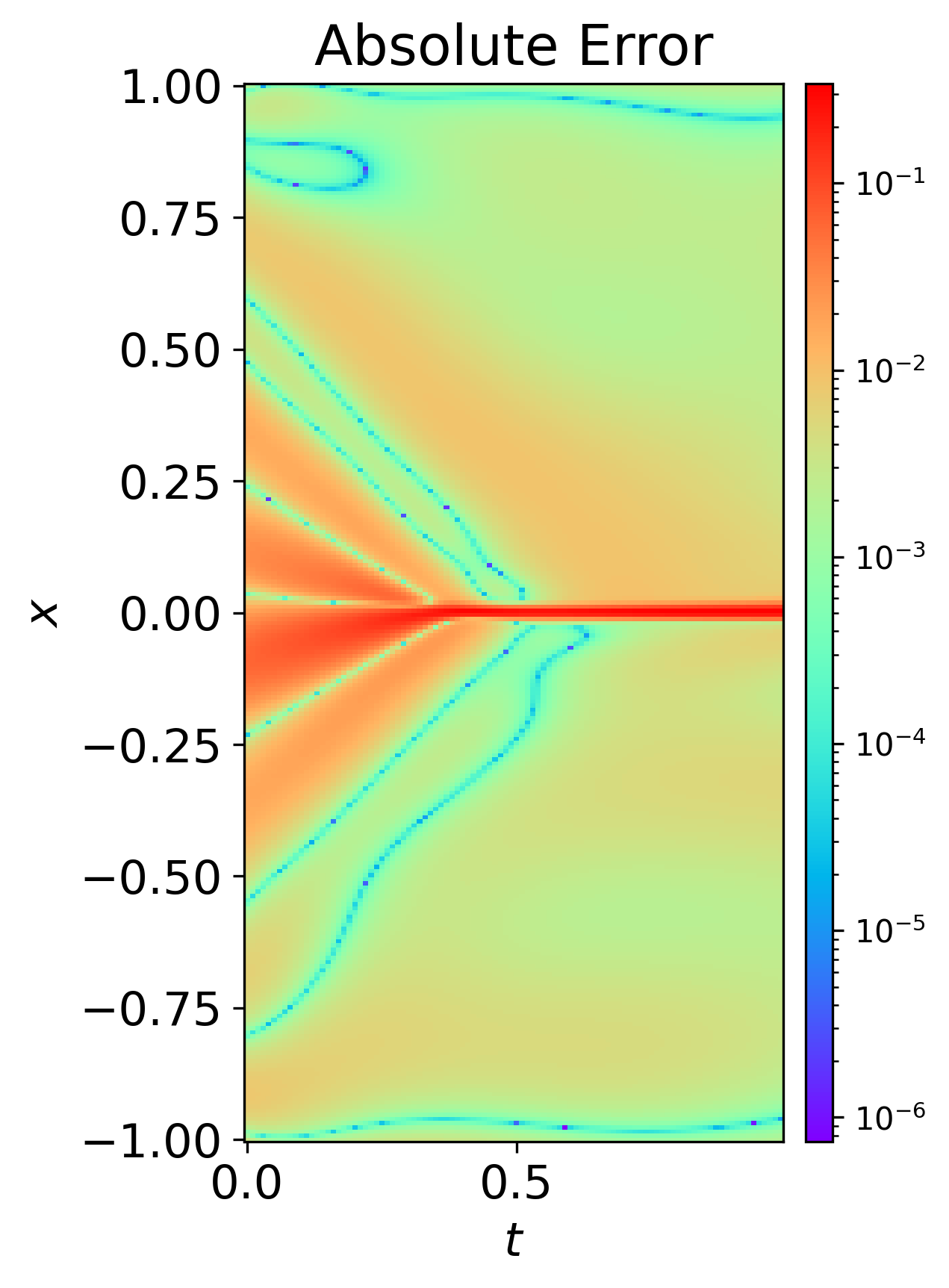}
    \caption{Spatial distribution of the $L_1$ error for the Burgers’ PINN trained with Adam(left), Muon(middle), and SpecMuon(right). SpecMuon exhibits reduced error magnitude.}
    \label{fig:nsga-kf-20}
\end{figure}

\subsection{DeepONet}

To further evaluate the performance of the proposed optimizer in an operator-learning setting, we apply it to a DeepONet model trained on Burgers’ equation~\ref{eq:burgers}. This experiment assesses whether the spectral–energy coupling mechanism remains effective in architectures with coupled branch and trunk networks.

The hyperparameter configurations obtained through grid search are summarized in Table~\ref{tab:hyperparameters_buger2}. For Adam and AdamW, the optimal learning rate was $5 \times 10^{-3}$. For the proposed SpecMuon method, the grid search identified an optimal learning rate of $1 \times 10^{-3}$ and momentum $0.9$. Notebly, the optimal spectral rank parameter was slightly higher for the DeepONet architecture, with $r_{\text{top}} = 8$, reflecting the richer gradient structure induced by operator learning.

\begin{table}[ht] \centering \caption{Hyperparameter configurations for the DeepONet experiment on Burgers’ equation. Learning rates were selected via grid search. The optimal spectral rank $r_{top}$ increases compared with the PINN setting due to richer gradient structure.} \label{tab:hyperparameters_buger2} \begin{tabular}{@{}llcc@{}} \toprule \textbf{Optimizer} & \textbf{Hyperparameter} & \textbf{Values Searched} & \textbf{Best Value Used} \\ \midrule 

 \multirow{3}{*}{\texttt{Adam}} & Learning Rate (\text{lr}) & \{0.01, 5e-3, 1e-3, 5e-4\} & 5e-3 \\ & Betas ($\beta_1, \beta_2$) & -- & (0.9, 0.999) \\ & Weight Decay ($\lambda$) & -- & 0 \\ \midrule %
 \multirow{3}{*}{\texttt{AdamW}} & Learning Rate (\text{lr}) & \{0.01, 5e-3, 1e-3, 5e-4\} & 5e-3\\ & Betas ($\beta_1, \beta_2$) & -- & (0.9, 0.999) \\ & Weight Decay ($\lambda$) & \{0.0, 0.01, 1e-3, 5e-4, 1e-4\} & 1e-2 \\ \midrule %
\multirow{2}{*}{\texttt{Muon}} & Learning Rate (\text{lr}) & \{0.1, 0.05, 5e-3, 1e-3\} & 1e-3 \\ & Momentum ($\beta$) & \{0.0, 0.02,  1e-3, 1e-4, 5e-4\} & 1e-4 \\ \midrule %
{\texttt{SpecMuon}} & Learning Rate (\text{lr}) & \{0.1, 0.05, 0.01, 1e-3, 1e-4\} & 1e-3 \\ & Momentum ($\beta$) & \{0.9,  0.01, 0.02, 1e-3, 1e-4, 5e-4\} & 0.9  \\ & r\textsubscript{top} & \{2 - 8\} & 8 \\ & SAV Eta ($\eta_{sav}$) & \{0.2 - 0.8\} & 0.2 \\ \bottomrule
 \end{tabular} 
 \end{table} 

Figure~\ref{fig:muon_sav_PINN_burger2} presents the training loss trajectories for the DeepONet model applied to Burgers’ equation. The proposed SpecMuon optimizer (solid red curve) is compared with Adam, AdamW, and the original Muon optimizer. As illustrated in the figure, SpecMuon exhibits faster convergence and achieves lower training loss throughout the training process. Figure~\ref{fig:muon_sav_rtop_burger2} shows the ablation study with respect to the spectral rank parameter $r_{top}$. The results indicate that retaining a moderate number of dominant singular directions improves convergence behavior, with $r_{\text{top}} = 8$ yielding the best empirical performance in this setting.

\begin{figure}[h]
    \centering
    \begin{subfigure}[t]{0.48\textwidth}
        \centering
        \includegraphics[width=\textwidth]{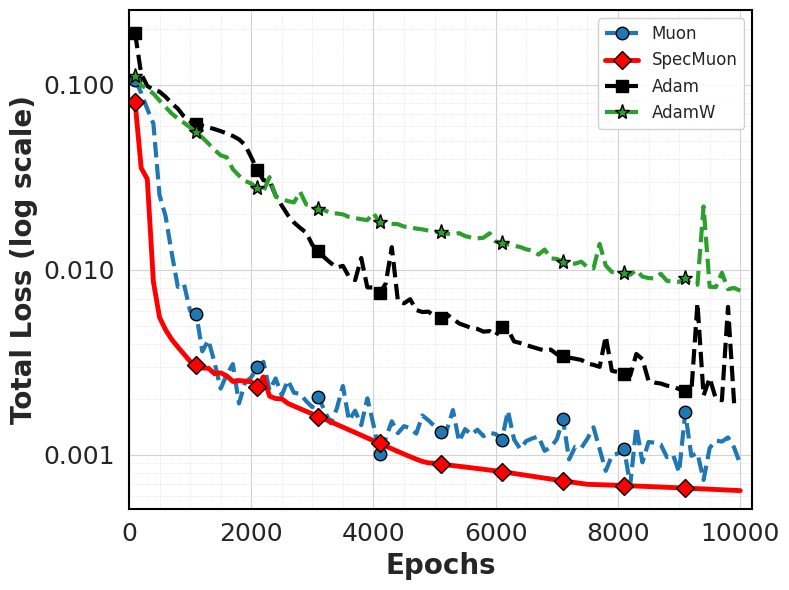}
        \caption{Training loss(log scale) comparison among Adam, AdamW, Muon, and SpecMuon.}
        \label{fig:muon_sav_PINN_burger2}
    \end{subfigure}
    \hfill
    \begin{subfigure}[t]{0.48\textwidth}
        \centering
        \includegraphics[width=\textwidth]{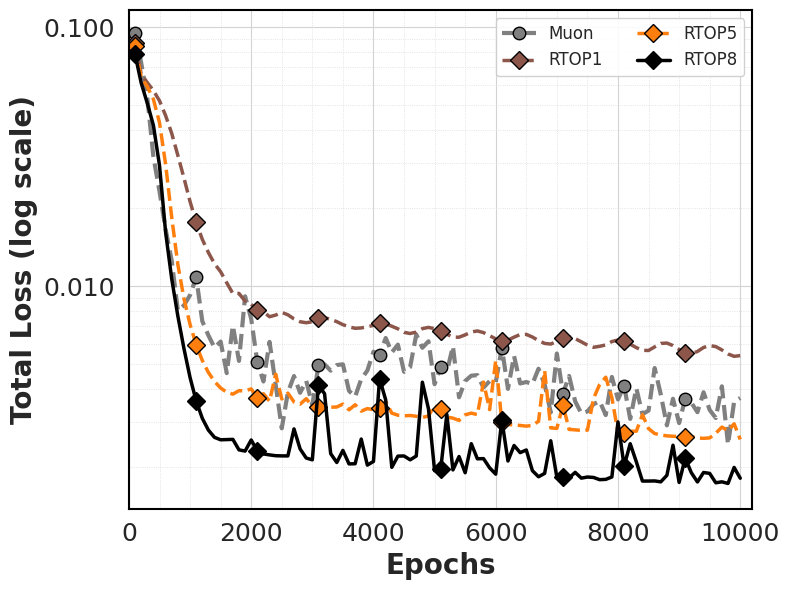}
        \caption{Ablation study on the spectral rank parameter $r_{top}$. Increasing the number of retained singular modes improves convergence up to an optimal range.}
        \label{fig:muon_sav_rtop_burger2}
    \end{subfigure}
    \caption{DeepONet training on the Burgers’ equation.}
    \label{fig:DeepONet_muon_sav_combined}
\end{figure}

The quantitative comparison is reported in Table~\ref{tab:DeepONet_results}. The SpecMuon attains the lowest final training loss and mean squared error (MSE) among all compared optimizers. Although the additional spectral computations introduce moderate computational overhead, the improved accuracy and stability demonstrate the effectiveness of integrating mode-wise energy regulation into Muon’s orthogonalized update framework.

\begin{table}[ht] \centering \caption{Quantitative comparison for the Burgers’ DeepONet experiment. Metrics include final training loss, MSE, and training time on an NVIDIA H100 80GB GPU. Best results are highlighted in bold.} 
\label{tab:DeepONet_results} 
\sisetup{separate-uncertainty, detect-weight=true, detect-inline-weight=math}  
\begin{tabular}{@{}lccc@{}} 
\toprule 
\textbf{Optimizer} & \textbf{Final Training Loss} & \textbf{MSE} & \textbf{Training Time (s)} \\ 
\midrule 
\texttt{Adam} & \num{1.89(03)e-3} & \num{0.0014} & \num{113(12)} \\ 
\texttt{AdamW} & \num{4.71(05)e-3} & \num{0.0013} & \num{114(10)} \\ 
\texttt{Muon} & \num{6.38(12)e-4} & \num{0.0011} & \num{204(13)} \\ 
\textbf{\texttt{SpecMuon}} & \textbf{\num{4.82(07)e-4}} & \textbf{\num{0.0010}} & \textbf{\num{259(10)}} \\ 
\bottomrule 
\end{tabular} 
\end{table}

\subsection{fPINN-DeepONet}\
We next consider the classical two-dimensional heat equation on the unit square with homogeneous Dirichlet boundaries:
\begin{align}
    \label{eq:2d-heat}
&\partial_t u(x,y,t) - \kappa \, \Delta u(x,y,t) \;=\; f(x,y,t),
\qquad (x,y)\in\Omega\equiv [0,1]^2,\; t\in[0,1],\\
&u|_{\partial\Omega}=0, 
\\
&u(x,y,0)=u_0(x,y).
\end{align}
To enable quantitative comparison, we consider the analytical solution
\begin{equation}
u(x,y,t) = \sin(\pi x)\sin(\pi y)\, \mathrm{e}^{-2\pi^2\kappa t},
\end{equation}
which corresponds to the homogeneous case with zero forcing, i.e., $f \equiv 0$ and initial condition $u_0(x,y) = \sin(\pi x)\sin(\pi y)$.

To ensure a fair comparison among optimizers, a grid search was conducted over the learning rate for each method. For the baseline optimizers (Adam and AdamW), standard values for the momentum coefficients and weight decay were considered. For Muon and the proposed SpecMuon, we additionally explored the influence of the spectral rank parameter $r_{top}$. The final hyperparameter configurations used in the comparative study are summarized in Table~\ref{tab:FPDE hyperparameters}.

\begin{table}[ht] \centering \caption{Hyperparameter configurations for the fPINN–DeepONet experiment on the two-dimensional heat equation. Learning rates were selected via grid search, and spectral rank $r_{top}$ was tuned for stability and accuracy.} \label{tab:FPDE hyperparameters} \begin{tabular}{@{}llcc@{}} \toprule \textbf{Optimizer} & \textbf{Hyperparameter} & \textbf{Values Searched} & \textbf{Best Value Used} \\ \midrule

 \multirow{3}{*}{\texttt{Adam}} & Learning Rate (\text{lr}) & \{0.01, 5e-3, 1e-3, 5e-4\} & 5e-3 \\ & Betas ($\beta_1, \beta_2$) & -- & (0.9, 0.999) \\ & Weight Decay ($\lambda$) & -- & 0 \\ \midrule %
 \multirow{3}{*}{\texttt{AdamW}} & Learning Rate (\text{lr}) & \{0.01, 5e-3, 1e-3, 5e-4\} & 5e-3\\ & Betas ($\beta_1, \beta_2$) & -- & (0.9, 0.999) \\ & Weight Decay ($\lambda$) & \{0.0, 0.01, 1e-3, 5e-4, 1e-4\} & 1e-2 \\ \midrule %
\multirow{2}{*}{\texttt{Muon}} & Learning Rate (\text{lr}) & \{0.1, 0.05, 5e-3, 1e-3\} & 1e-3 \\ & Momentum ($\beta$) & \{0.0, 0.02,  1e-3, 1e-4, 5e-4\} & 1e-4 \\ \midrule %
{\texttt{SpecMuon}} & Learning Rate (\text{lr}) & \{0.1, 0.05, 0.01, 1e-3, 1e-4\} & 1e-3 \\ & Momentum ($\beta$) & \{0.9,  0.01, 0.02, 1e-3, 1e-4, 5e-4\} & 0.9  \\ & r\textsubscript{top} & \{2 - 10\} & 8 \\ & SAV Eta ($\eta_{sav}$) & \{0.2 - 0.8\} & 0.2 \\ \bottomrule
 \end{tabular} 
 \end{table} 

The performance of each optimizer was evaluated using three metrics: the final training loss, the mean squared error (MSE) computed over the full space–time grid, and the relative $L_2$ error. The proposed SpecMuon optimizer achieves the lowest training loss and the smallest relative $L_2$ error among the compared methods, demonstrating improved accuracy in approximating the analytical solution.

The quantitative results are presented in Table~\ref{tab:FPDEresults}. Although the additional spectral computations introduce moderate computational overhead, the results indicate that integrating spectral decomposition with energy-based regulation enhances convergence robustness and solution quality in the fractional PINN–DeepONet framework.

\begin{figure}[h]
    \centering
    \begin{subfigure}[t]{0.48\textwidth}
        \centering
        \includegraphics[width=\textwidth]{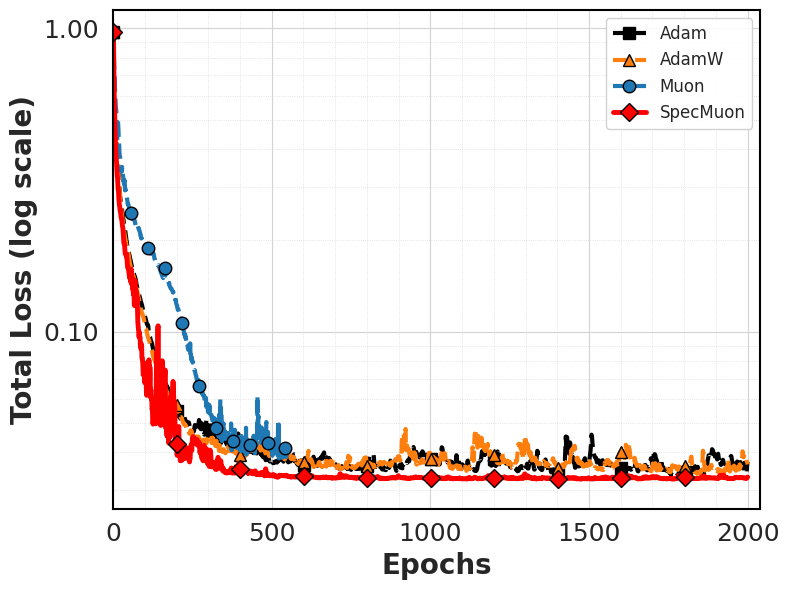}
        \caption{Training loss (log scale) comparison among Adam, AdamW, Muon, and SpecMuon.}
        \label{fig:muon_sav_fPINN-DeepONet}
    \end{subfigure}
    \hfill
    \begin{subfigure}[t]{0.48\textwidth}
        \centering
        \includegraphics[width=\textwidth]{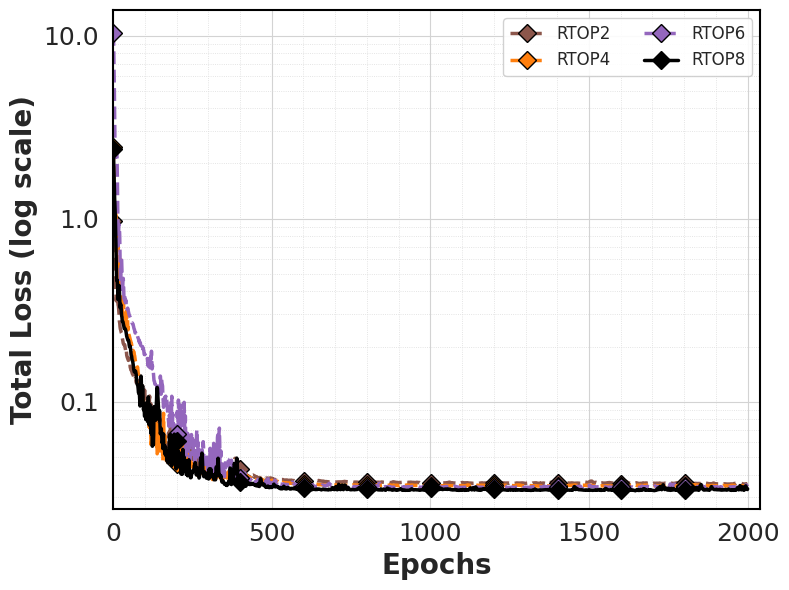}
        \caption{Ablation study on the spectral rank parameter $r_{top}$. Moderate spectral truncation provides the best balance between stability and efficiency.}
        \label{fig:muon_sav_rtop}
    \end{subfigure}
    \caption{fPINN–DeepONet training on the fractional PDE problem.}
    \label{fig:FPDE_muon_sav_combined}
\end{figure}

\begin{table}[ht] \centering \caption{Quantitative comparison for the fPINN–DeepONet experiment. Training loss is reported at 500 epochs. Training time was measured on an NVIDIA H100 80GB GPU. Best results are highlighted in bold.} 
\label{tab:FPDEresults} 
\sisetup{separate-uncertainty, detect-weight=true, detect-inline-weight=math}  
\begin{tabular}{@{}lccc@{}} 
\toprule 
\textbf{Optimizer} & \textbf{Training Loss} & \textbf{MSE} & \textbf{Training Time (s)} \\ 
\midrule 
\texttt{Adam} & \num{3.72(04)e-2} & \num{0.0268} & \num{19.41} \\ 
\texttt{AdamW} & \num{3.64(04)e-2} & \num{0.0275} & \num{13.37} \\ 
\texttt{Muon} & \num{3.52(06)e-2} & \num{0.0272} & \num{18.24} \\ 
\textbf{\texttt{SpecMuon}} & \textbf{\num{2.31(05)e-2}} & \textbf{\num{0.0262}} & \textbf{\num{30.74}} \\ 
\bottomrule 
\end{tabular} 
\end{table}

\section{Conclusion and Future Work}

In this work, we introduced SpecMuon, a spectral-aware optimization framework designed for physics-informed machine learning. By integrating orthogonalized gradient updates in the singular-vector basis with a mode-wise relaxed scalar auxiliary variable (RSAV) mechanism, SpecMuon unifies geometric conditioning and energy-based stabilization within a single optimizer. This formulation interprets optimization as a multi-mode gradient flow and enables adaptive control of stiff spectral components that commonly arise in physics-informed neural networks and neural operators.

We established rigorous theoretical properties of the proposed method, including positivity of the auxiliary variables, a modified energy dissipation law, and global convergence guarantees. Under standard smoothness assumptions and the Polyak--\L{}ojasiewicz condition, we further proved linear convergence to the global optimum. These results provide a solid mathematical foundation for the proposed algorithm and clarify the role of spectral decomposition and energy regulation in shaping stable optimization dynamics.

Extensive numerical experiments demonstrated the effectiveness of SpecMuon across a hierarchy of problems. Starting from a simple linear regression model, we showed that spectral geometry and energy-aware updates can improve optimization behavior even in convex settings. We then validated the method on increasingly challenging physics-informed learning tasks, including PINNs, DeepONets, and fractional PINN--DeepONets, where SpecMuon consistently achieved faster convergence and improved stability compared with Adam, AdamW, and the original Muon optimizer. These results confirm that incorporating spectral and energy information is particularly beneficial in stiff, multi-scale learning problems driven by physical constraints.

Several directions for future research are worth exploring. From an algorithmic perspective, extending SpecMuon to stochastic or mini-batch settings and studying its interaction with variance reduction techniques would be valuable for large-scale applications. Incorporating adaptive strategies for selecting the number of retained spectral modes may further improve efficiency and robustness. From a theoretical standpoint, sharper convergence rates under weaker assumptions and a deeper analysis of spectral evolution across iterations remain open questions. Finally, applying the proposed framework to broader classes of operator learning problems, inverse problems, and multi-physics systems may reveal additional advantages of spectral energy–aware optimization in scientific machine learning.

\section*{Acknowledgment}
We would like to thank the support of National Science Foundation (DMS-2533878, DMS-2053746, DMS-2134209, ECCS-2328241, CBET-2347401 and OAC-2311848), and U.S.~Department of Energy (DOE) Office of Science Advanced Scientific Computing Research program DE-SC0023161, the SciDAC LEADS Institute, and DOE–Fusion Energy Science, under grant number: DE-SC0024583.

\bibliographystyle{unsrt}
\bibliography{refs}

\end{document}